\definecolor{indigo}{RGB}{63, 81, 181} %
\definecolor{red}{RGB}{219, 68, 55} %
\definecolor{pink}{RGB}{253, 30, 98}
\definecolor{green}{RGB}{15, 157, 88}
\newcolumntype{P}[1]{>{\centering\arraybackslash}p{#1}}
\newcommand{\PreserveBackslash}[1]{\let\temp=\\#1\let\\=\temp}
\newcolumntype{C}[1]{>{\PreserveBackslash\centering}p{#1}}
\newcolumntype{R}[1]{>{\PreserveBackslash\raggedleft}p{#1}}
\newcolumntype{L}[1]{>{\PreserveBackslash\raggedright}p{#1}}
\newcommand{\pred}{\bm{f}}
\newcommand{\vx}{\bm{x}}
\newcommand{\vz}{\bm{z}}
\newcommand{\vu}{\bm{u}}
\newcommand{\vv}{\bm{v}}
\newcommand{\vd}{\bm{\delta}}
\newcommand{\vm}{\bm{m}}
\newcommand{\vga}{\bm{\gamma}}
\newcommand{\explainer}{\bm{g}}
\newcommand{\eva}{\text{EVA}}
\newcommand{\evaEmp}{\text{EVA}\textsubscript{emp}\xspace}
\newcommand{\evaH}{\text{EVA}\textsubscript{hybrid}\xspace}
\newcommand{\adv}{\textit{adversarial overlap}\xspace}
\newcommand{\AO}{\textit{AO}\xspace}
\newcommand{\AOup}{\overline{\textit{AO}}\xspace}
\newcommand{\AOemp}{\hat{\textit{AO}}\xspace}
\newcommand{\Adv}{\textit{Adversarial overlap}}
\newcommand{\ball}{\mathcal{B}}
\newcommand{\ballu}{\mathcal{B}_{\bm{u}}}
\newcommand{\rsr}{{Robustness\text{-}S\textsubscript{r}}}
\DeclareMathOperator*{\argmax}{arg\,max}
\DeclareMathOperator*{\argmin}{arg\,min}
\crefname{section}{Sec.}{Secs.}
\Crefname{section}{Section}{Sections}
\Crefname{table}{Table}{Tables}
\crefname{table}{Tab.}{Tabs.}
\theoremstyle{plain}
\newtheorem{theorem}{Theorem}[section]
\newtheorem{proposition}[theorem]{Proposition}
\theoremstyle{definition}
\newtheorem{definition}[theorem]{Definition}
\theoremstyle{remark}
\begin{document}

\title{Don't Lie to Me!\\ Robust and Efficient Explainability with Verified Perturbation Analysis}

\author{
    \hspace{0cm} 
    \textbf{Thomas Fel}$^{1,2,5}$\footnotemark[2]
    \hspace{2mm}
    \textbf{Melanie Ducoffe}$^{2,7}$\footnotemark[2]
    \hspace{2mm}
    \textbf{David Vigouroux}$^{2,4}$\footnotemark[2] 
    \hspace{2mm}
    \textbf{Rémi Cadène}$^{1, 3}$
    \hspace{2mm}
    \\
    \hspace{1cm}
    \textbf{Mikael Capelle}$^{6}$
    \hspace{1cm}
    \textbf{Claire Nicodème}$^{5}$
    \hspace{1cm}
    \textbf{Thomas Serre}$^{1,2}$
    \hspace{0cm}
\\
\\
$^1$Carney Institute for Brain Science, Brown University, USA \\
$^2$Artificial and Natural Intelligence Toulouse Institute \\
$^3$Sorbonne Université, CNRS, France  ~
$^4$IRT Saint-Exupery, France \\
$^5$Innovation \& Research Division, SNCF  ~
$^6$Thales Alenia Space, France \\
$^7$Airbus AI Research
}
\maketitle

\begin{abstract}

A plethora of attribution methods have recently been developed to explain deep neural networks.
These methods use different classes of perturbations (e.g, occlusion, blurring, masking, etc) to estimate the importance of individual image pixels to drive a model's decision.
Nevertheless, the space of possible perturbations is vast and current attribution methods typically require significant computation time to accurately sample the space in order to achieve high-quality explanations. 
In this work, we introduce EVA (Explaining using Verified Perturbation Analysis) -- the first explainability method which comes with guarantees that an entire set of possible perturbations has been exhaustively searched. We leverage recent progress in verified perturbation analysis methods to directly propagate bounds through a neural network to exhaustively probe a -- potentially infinite-size --  set of perturbations in a single forward pass. Our approach takes advantage of the beneficial properties of verified perturbation analysis, i.e., time efficiency and guaranteed complete -- sampling agnostic -- coverage of the perturbation space -- to identify image pixels that drive a model's decision. 
We  evaluate EVA systematically and demonstrate state-of-the-art results on multiple benchmarks. Our code is freely available:
\href{https://github.com/deel-ai/formal-explainability}{\nolinkurl{github.com/deel-ai/formal-explainability}}

\end{abstract}

\begin{figure}[t!]
  \includegraphics[width=0.5\textwidth]{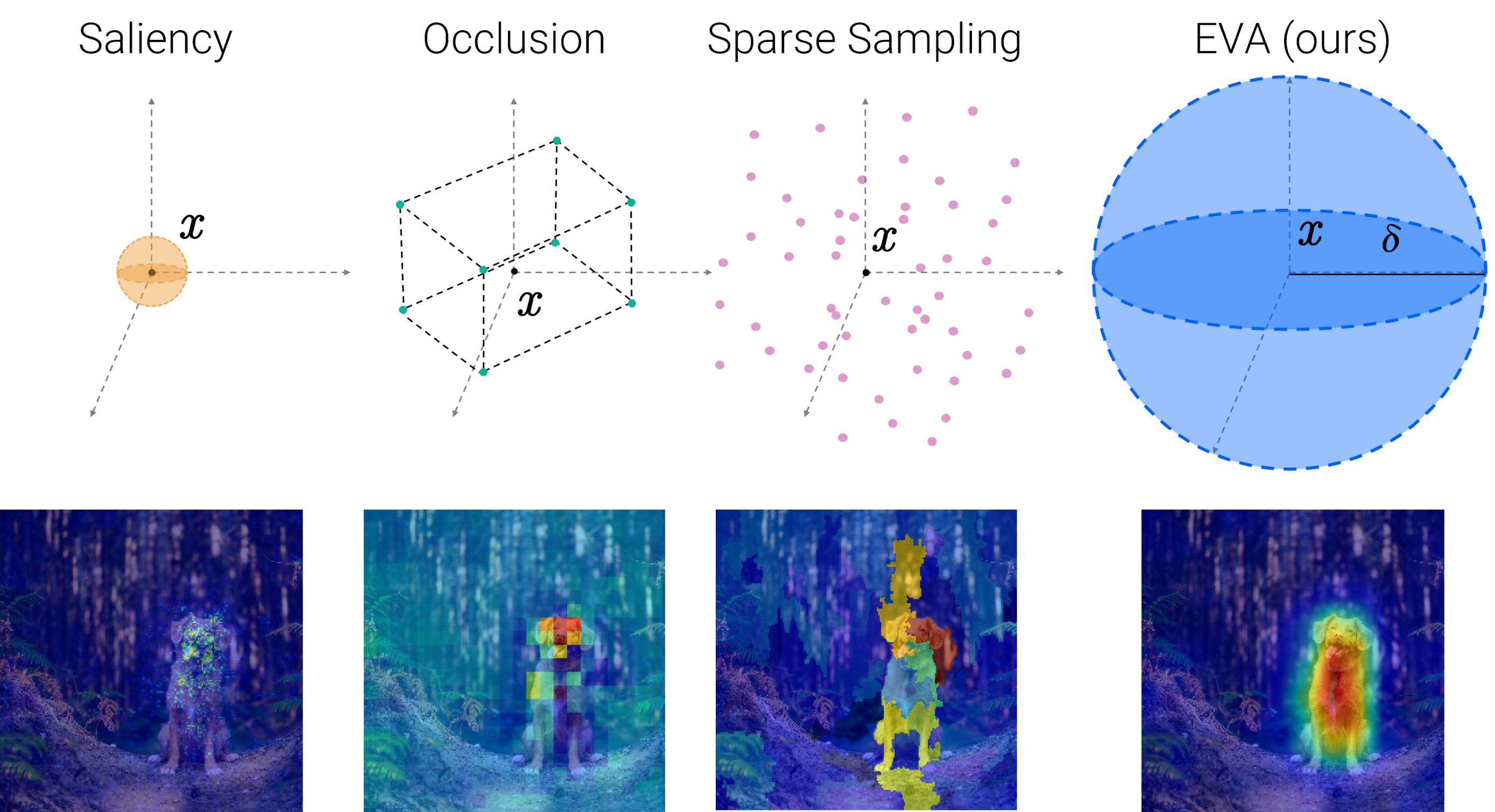}
  \caption{
  \textbf{Manifold exploration of current attribution methods.}
  Current methods assign an importance score to individual pixels using perturbations around a given input image $\vx$. Saliency~\cite{simonyan2013deep} uses infinitesimal perturbations around $\vx$, Occlusion~\cite{zeiler2013visualizing} switches individual pixel intensities on/off. More recent approaches~\cite{ribeiro2016i, lundberg2017unified, petsiuk2018rise, fel2021sobol, novello2022making} use (Quasi-) random sampling methods in specific perturbation spaces (occlusion of segments of pixels, blurring, ...). However, the choice of the perturbation space undoubtedly biases the results -- potentially even introducing serious artifacts~\cite{sturmfels2020visualizing,hsieh2020evaluations,haug2021baselines,kindermans2019reliability}.
  We propose to use verified perturbation analysis to efficiently perform a complete coverage of a perturbation space around $\vx$ to produce reliable and faithful explanations.
  }
  \label{fig:big_picture}
\end{figure}

\vspace{-3mm}
\section{Introduction}
\label{introduction}
%\vspace{-2mm}
\let\thefootnote\relax\footnotetext{\textit{Proceedings of the IEEE / CVF Computer Vision and Pattern Recognition Conference (CVPR), 2023.}}

Deep neural networks are now being widely deployed in many applications from medicine, transportation, and security to finance, with broad societal implications~\cite{lecun2015deep}. They are routinely used to making safety-critical decisions --  often without an explanation as their decisions are notoriously hard to interpret. 

Many explainability methods have been proposed to gain insight into how network models arrive at a particular decision~\cite{zeiler2013visualizing, ribeiro2016i, lundberg2017unified, smilkov2017smoothgrad, shrikumar2017learning, sundararajan2017axiomatic, petsiuk2018rise, Selvaraju_2019, fel2021sobol,novello2022making,graziani2021sharpening}.
The applications of these methods are multiple -- from helping to improve or debug their decisions to helping instill confidence in the reliability of their decisions~\cite{doshivelez2017rigorous}. 
Unfortunately, a severe limitation of these approaches is that they are subject to a confirmation bias: while they appear to offer useful explanations to a human experimenter, they may produce incorrect explanations~\cite{adebayo2018sanity, ghorbani2017interpretation, slack2021counterfactual}.
In other words, just because the explanations make sense to humans does not mean that they actually convey what is actually happening within the model.
Therefore, the community is actively seeking for better benchmarks involving humans~\cite{hsieh2020evaluations,nguyen2021effectiveness,fel2021cannot,kim2021hive}.

In the meantime, it has been shown that some of our current and commonly used benchmarks are biased and that explainability methods reflect these biases -- ultimately providing the wrong explanation for the behavior of the model~\cite{sturmfels2020visualizing,hsieh2020evaluations,hase2021out}.
For example, some of the current fidelity metrics \cite{petsiuk2018rise, aggregating2020,jacovi2020towards,hedstrom2022quantus,fel2021xplique} mask one or a few of the input variables (with a fixed value such as a gray mask) in order to assess how much they contribute to the output of the system. Trivially, if these variables are already set to the mask value in a given image (e.g., gray), masking these variables will not yield any effect on the model's output and the importance of these variables is poised to be underestimated. 
Finally, these methods rely on sampling a space of perturbations that is far too vast to be fully explored -- e.g., LIME on a image divided in $64$ segments image would need more than $10^{19}$ samples to test all possible perturbations. 
As a result, current attribution methods may be subject to bias and are potentially not entirely reliable.

To address the baseline issue, a growing body of work is starting to leverage adversarial methods~\cite{hsieh2020evaluations, boopathy2020proper, lin2019explanations,ross2021learning,idrissi2021developments} to derive explanations that reflect the robustness of the model to local adversarial perturbations. Specifically, a pixel or an image region is considered important 
if it allows the easy generation of an adversarial example. 
That is if a small perturbation of that pixel or image region yields a large change in the model's output. This idea has led to the design of several novel robustness metrics to evaluate the quality of explanations, such as \rsr\cite{hsieh2020evaluations}.
For a better ranking on those robustness metrics, several methods have been proposed that make intensive use of adversarial attacks~\cite{hsieh2020evaluations,yin2022sensitivity}, such as Greedy-AS for \rsr.
However, these methods are computationally very costly -- in some cases, requiring over 50 000 adversarial attacks per explanation -- severely limiting the widespread adoption of these methods in real-world scenarios.

In this work, we propose to address this limitation by introducing \eva~(Explaining using Verified perturbation Analysis), a new explainability method based on robustness analysis. Verified perturbation analysis is a rapidly growing toolkit of methods to derive bounds on the outputs of neural networks in the presence of input perturbations. In contrast to current attributions methods based on gradient estimation or sampling, verified perturbation analysis allows the full exploration of the perturbation space, see Fig.~\ref{fig:big_picture}. We use a tractable certified upper bound of robustness confidence to derive a new estimator to help quantify the importance of input variables (i.e., those that matter the most). That is, the variables most likely to change the predictor's decision.

We can summarize our main contributions as follows:
\begin{itemize}[leftmargin=*]
    \vspace{-2mm}
    \item We introduce \eva, the first explainability method guaranteed to explore its entire set of perturbations using Verified Perturbation Analysis.
    \vspace{-2mm}
    \item We propose a method to scale \eva~to large vision models and show that the exhaustive  exploration of all possible perturbations can be done efficiently.
    \vspace{-2mm}
    \item We systematically evaluate our approach using several image datasets and show that it yields convincing results on a large range of explainability metrics 
    \vspace{-2mm}
    \item Finally, we demonstrate that we can use the proposed method to generate class-specific explanations, and we study the effects of several verified perturbation analysis methods as a hyperparameter of the generated explanations. 

\end{itemize}

\vspace{-3mm}

\section{Related Work}
\label{related_work}

\paragraph{Attribution Methods.}

Our approach builds on prior attribution methods in order to explain the prediction of a deep neural network via the identification of input variables that support the prediction (typically pixels or image regions for images -- which lead to importance maps shown in Fig.~\ref{fig:big_picture}). ``Saliency'' was first introduced in~\cite{baehrens2010explain} and consists in using the gradient of a classification score. It was later refined in~\cite{simonyan2014deep, zeiler2014visualizing, springenberg2014striving, sundararajan2017axiomatic, smilkov2017smoothgrad} in the context of deep convolutional networks for classification. 
However, the image gradient only reflects the model's operation within an infinitesimal neighborhood around an input.Hence, it can yield misleading importance estimates~\cite{ghalebikesabi2021locality} since gradients of the current large vision models are noisy~\cite{smilkov2017smoothgrad}. 
Other methods rely on different image perturbations applied to images to  produce  importance maps that reflect the corresponding change in classification score resulting from the perturbation.  
Methods such as ``Occlusion''~\cite{zeiler2014visualizing}, ``LIME''~\cite{ribeiro2016i}, ``RISE''~\cite{petsiuk2018rise}, ``Sobol''~\cite{fel2021sobol} or ``HSIC''~\cite{novello2022making} leverage different sampling strategies to explore the space of perturbations around the image.
For instance, Occlusion uses binary masks to occlude individual image regions, one at a time. RISE and HSIC combines these discrete masks to perturb multiple regions simultaneously. Sobol uses continuous masks for a finer exploration of the perturbation space.

Nevertheless, none of these methods are able to systematically cover the full space of perturbations.
As a result, the corresponding explanations may not reliably reflect the true importance of pixels.
In contrast, our approach comes with strong guarantees that can be derived from the verified perturbation analysis framework as it provides bounds on the perturbation space.

\vspace{-2mm}
\paragraph{Robustness-based Explanation.} To try to address the aforementioned limitations, several groups~\cite{ignatiev2019abduction, ignatiev2019relating, slack2021reliable, hsieh2020evaluations, boopathy2020proper, lin2019explanations, fel2020representativity} have focused on the development of a new set of robustness-based evaluation metrics for trustworthy explanations. 
These new metrics are in contrast with the previous ones, which consisted in removing the pixels considered important in an explanation by substituting them with a fixed baseline -- which inevitably introduces bias and artifacts~\cite{hsieh2020evaluations,sturmfels2020visualizing,haug2021baselines,kindermans2019reliability,hase2021out}. 
Key to these new metrics is the assumption that when the important pixels are in their nominal (fixed) state, then perturbations applied to the complementary pixels -- deemed unimportant -- should not affect the model's decision to any great extent. The corollary that follows is that perturbations limited to the pixels considered important should easily influence the model's decision~\cite{lin2019explanations,hsieh2020evaluations}.
Going further along the path of robustness, abductive reasoning was used in~\cite{ignatiev2019abduction} to compute optimal subsets with guarantees.  The challenge consists  in looking for the subset with the smallest possible  cardinality -- to guarantee the decision of the model. This work constituted one of the early successes of formal methods for explainability, but the approach was limited to low-dimensional problems and shallow neural networks. It was later extended to relax the subset minimum explanation by either providing multiple explanations, aggregating pixels in bundles~\cite{bassan2022towards} or by using local surrogates~\cite{boumazouza2021asteryx}.

Some heuristics-oriented works also propose to optimize these new robustness based criteria and design new methods using a generative model~\cite{o2020generative} or adversarial attacks~\cite{hsieh2020evaluations}.
The latter approach requires searching for the existence or lack of an adversarial example for a multitude of $\ell_p$ balls around the input of interest. As a result, the induced computational cost is quite high as the authors used more than $50000$  computations of adversarial examples to generate a single explanation. 

More importantly, a failure to find an adversarial perturbation for a given radius does not guarantee that none exists. In fact, it is not uncommon for adversarial attacks to fail to converge --  or fail to find an adversarial example -- which will result in a failure to output an importance score.
Our method addresses these issues while drastically reducing the computation cost.
An added benefit of our approach is that verified perturbation analysis provides additional guarantees and hence opens the doors of certification which is a necessity for safety-critical applications.

\vspace{-3mm}

\paragraph{Verified Perturbation Analysis.} This growing field of research focuses on the development of methods that outer-approximate neural network outputs given some input perturbations. 
Simply put, for a given input $\vx$ and a bounded perturbation $\vd$, verification methods yield minimum $\underline{\pred}(\vx)$ and maximum $\overline{\pred}(\vx)$ bounds on the output of a model. Formally $\forall~ \vd ~s.t~ ||\vd||_p \leq \varepsilon$:

\vspace{-2mm}
$$
\underline{\pred}(\vx) \leq \pred(\vx + \vd) \leq \overline{\pred}(\vx). 
$$
This allows us to explore the whole perturbation space without having to explicitly sample points in that space.

Early works focused on computing reachable lower and upper bounds based on satisfiability modulo theories~\cite{katz2017reluplex, ehlers2017formal}, and mixed-integer linear programming problems~\cite{tjeng2017verifying}. While these early results were encouraging, the proposed methods struggled even for small networks and image datasets. More recent work has led to the independent development of methods for computing looser certified lower and upper bounds more efficiently thanks to convex linear relaxations either in the primal or dual space~\cite{salman2019convex}. 
While looser, those bounds remain tight enough to yield non-ubiquitous robustness properties on medium size neural networks. CROWN (hereafter called Backward) uses Linear Relaxation-based Perturbation Analysis (LiRPA) and achieves the tightest bound for efficient single neuron linear relaxation~\cite{zhang2018efficient, singh2019abstract, wang2021beta}. 
In addition, linear relaxation methods offer a wide range of possibilities with a vast trade-off between ``tigthness'' of the bounds and efficiency. 
These methods form two broad classes: `forward' methods which propagate constant bounds (more generally affine relaxations from the input to the output of the network) also called Interval Bound Propagation (IBP, Forward, IBP+Forward) vs. `backward' methods which bound the output of the network by affine relaxations given the internal layers of the network, starting from the output to the input. Note that these methods can be combined, e.g. (CROWN + IBP + Forward).
For a thorough description of the LiRPA framework and theoretical analysis of the worst-case complexities of each variant, see~\cite{xu2020automatic}.
In this work, we remain purposefully agnostic to the verification method used and opt for the most accurate LiRPA method applicable to the predictor. Our approach is based on the formal verification framework DecoMon, based on Keras~\cite{ducoffe2021decomon}.

\section{Explainability with Verified Perturbation Analysis}
\label{framework}

\vspace{-2mm}
\paragraph{Notation.} We consider a standard supervised machine-learning classification setting with input space $\mathcal{X} \subseteq \mathbb{R}^d$, an output space $\mathcal{Y} \subseteq \mathbb{R}^c$, and a predictor function $\pred : \mathcal{X} \to \mathcal{Y}$ that maps an input vector $\vx~=~(x_1,~\ldots{},~x_d)$ to an output $\pred(\vx)~=~\left(\pred_1(\vx),~\ldots{},~\pred_c(\vx)\right)$.
We denote $\ball~=~\{\vd \in \mathbb{R}^d ~:~  || \vd||_{p} \leq \varepsilon \}$ 
the perturbation ball with radius $\varepsilon > 0$,  with 
$p \in \{1, 2, \infty\}$.
For any subset of indices $\vu\subseteq\{1,~\ldots{},~d\}$, we denote 
$\ballu$ the ball without perturbation on the variables in $\vu$:
$\ballu = \{ \vd ~:~ \vd \in \ball, ~ \vd_{\vu} = 0 \}$ and $\ball(\bm{x})$ the perturbation ball centered on $\bm{x}$. We denote the lower (resp. upper) bounds obtained with verification perturbation analysis as $\underline{\pred}(\vx, \ball)~=~\left(\underline{\pred}_1(\vx, \ball),~\ldots{},~\underline{\pred}_c(\vx, \ball)\right)$, and  
$\overline{\pred}(\vx, \ball) = \left(\overline{\pred}_1(\vx, \ball),~\ldots{},~\overline{\pred}_c(\vx, \ball)\right)$. 
Intuitively, these bounds delimit the output prediction for any perturbed sample in $\ball(\vx)$.

\subsection{The importance of setting the importance right}

Different attribution methods implicitly assume different definitions of the notion of importance for input variables based either on game theory~\cite{lundberg2017unified}, the notion of conditional expectation of the score logits~\cite{petsiuk2018rise}, their variance~\cite{fel2021sobol} or on some  measure of statistical dependency between different areas of an input image and the output of the model~\cite{novello2022making}.
In this work, we build on robustness-based explainability methods~\cite{hsieh2020evaluations} which assume that a variable is important if small perturbations of this variable lead to large changes in the model decision.
Conversely, a variable is said to be unimportant if changes to this variable only yield small changes in the model decision.
From this intuitive assertion, we construct an estimator that we call \Adv.

\subsection{Adversarial overlap}

\vspace{-1mm}
We go one step beyond previous work and propose to compute importance by taking into account not only the ability of individual variables to change the network's decision but also its confidence in the prediction.
\Adv ~ measures the extent to which a modification on a group of pixels can generate overlap between classes, i.e. generate a point close to $\vx$ such that the attainable maximum of an unfavorable class $c'$ can match the minimum of the initially predicted class $c$.

Indeed, if a modification of a pixel -- or group of pixels -- allows generating a new image that changes the decision of $\pred$, this variable must be considered important. 
Conversely, if the decision does not change regardless of the value of the pixel, then the pixel can be left at its nominal value and should be considered unimportant. 

Among the set of possible variable perturbations $\vd$ around a point $\vx$, we, therefore, look for points that can modify the decision with the most confidence.
Hence our scoring criterion can be formulated as follows:
\vspace{-2mm}
\begin{equation}\label{eq:adv_surface}
    \AO_c(\vx, \mathcal{B}) = 
    \max_{\substack{\vd \in \mathcal{B}\\c'\neq{}c}} \pred_{c'}(\vx + \vd) - \pred_c(\vx + \vd).
\end{equation}
\vspace{-3mm}

Intuitively, this score represents the confidence of the ``best'' adversarial perturbation that can be found in the perturbation ball $\ball$ around $\vx$.
Throughout the article, when $c$ is not specified, it is assumed that $c = \argmax \pred(\vx)$.

In order to estimate this criterion, a naive strategy could be to use adversarial attacks to search within $\ball$. 
However, when they converge - which is not ensured, such methods only explore certain points of the considered space, thus giving no guarantee regarding the optimality of the solution. 
Moreover, adversarial methods have no guarantee of success and therefore cannot ensure a valid score under every circumstance.
Finally, the large dimensions of the current datasets make exhaustive searches impossible.

To overcome these issues, we take advantage of one of the main results from verified perturbation analysis to derive a guaranteed upper bound on the criterion introduced in Eq.~\ref{eq:adv_surface}. 
We can upper bound the \adv{} criterion as follows: 
$$ 
\AO(\vx, \ball) \leq \AOup(\vx, \ball) = \max\limits_{c'\neq c} \overline{\pred}_{c'}(\vx, \ball) - \underline{\pred}_c(\vx, \ball). 
$$
The computation of this upper bound becomes tractable using any verified perturbation analysis method.

For example, $\AOup(\vx, \ball) \leq 0$  guarantees that no adversarial perturbation is possible in the perturbation space.\footnote{Note that with adversarial attacks, failure to find an adversarial example does not guarantee that it does not exist.}
Our upper bound $\AOup(\vx, \ball)$ corresponds to the difference between the verified lower bound      of the class of interest $c$ and the maximum over the verified upper bounds among the other classes.
Thus, when important variables are modified (e.g the head of the dog in Fig.~\ref{fig:eva}, using $\textcolor{pink}{\ball}$), the lower bound for the class of interest will get smaller than the upper bound of the adversary class. On the other hand, this overlap is not possible when important variables are fixed (e.g in Fig.~\ref{fig:eva} when the head of the dog is fixed, using $\textcolor{indigo}{\ballu}$).
We now demonstrate how to leverage this score to derive an efficient estimator of variable importance.

\begin{figure*}[t!]
  \includegraphics[width=0.99\textwidth]{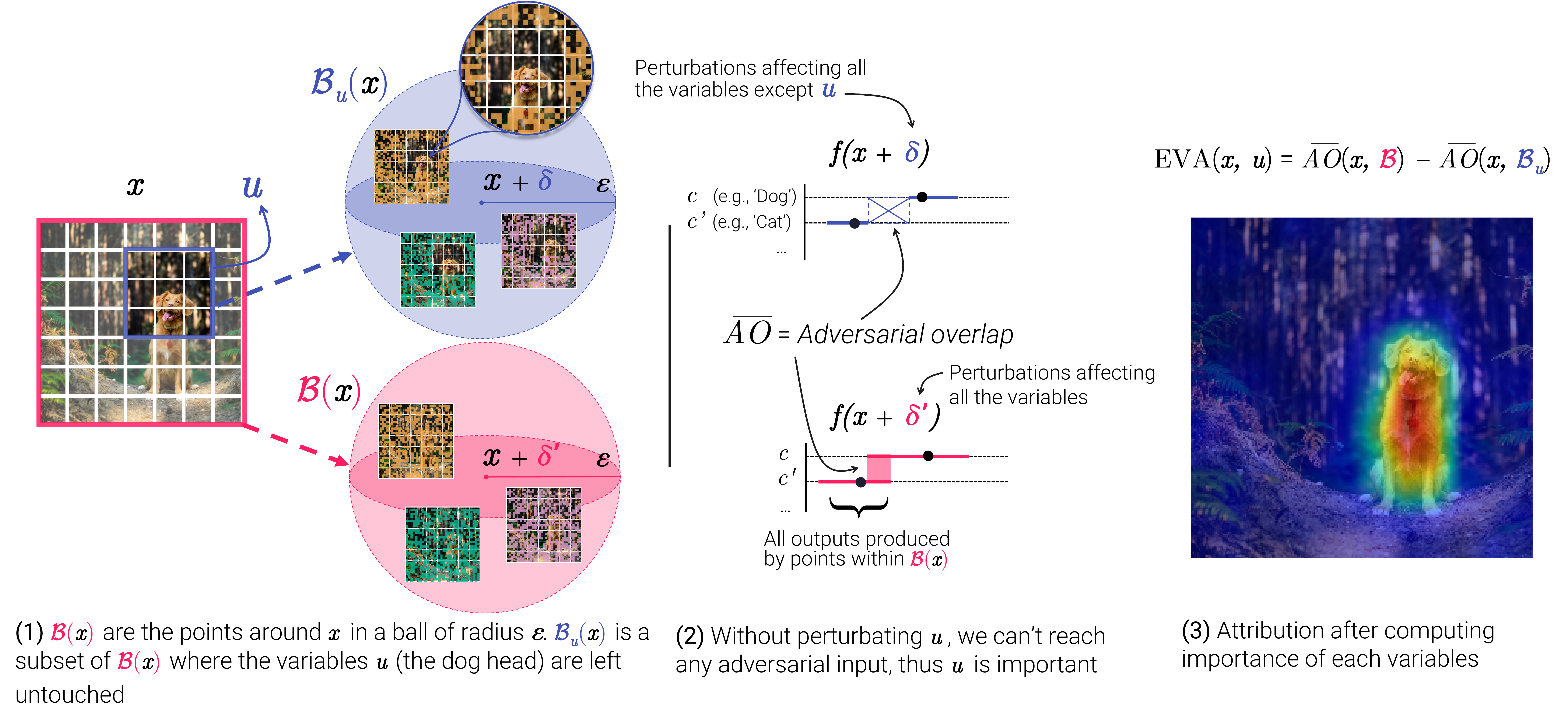}
  \caption{
  \textbf{\eva~attribution method.} In order to compute the importance for a group of variables $\vu$ -- for instance the dog's head -- the first step (1) consists in designing the perturbation ball $\textcolor{indigo}{\ballu}(\vx)$. This ball is centered in $\vx$ and contain all the possible images perturbed by $\textcolor{indigo}{\vd} ~s.t~ ||\textcolor{indigo}{\vd}||_{p} \leq \varepsilon, ||\textcolor{indigo}{\vd}_{\vu}||_p = 0$ which do not perturb the variables $\textcolor{indigo}{\vu}$. Using verified perturbation analysis, we then compute the \adv~ $\AOup(\vx, \textcolor{indigo}{\ballu})$ which corresponds to the overlapping between the class $c$ -- here dog -- and $c'$, the maximum among the other classes. Finally, the importance score for the variable $\vu$ corresponds to the drop in \adv~ when $\vu$ cannot be perturbed, thus the difference between $\AOup(\vx, \textcolor{pink}{\ball})$ and $\AOup(\vx, \textcolor{indigo}{\ballu})$. 
  Specifically, this measures how important the variables $\vu$ are for changing the model's decision.
  }
  \vspace{-3mm}
  \label{fig:eva}
\end{figure*}
\subsection{\eva}

We are willing to assign a higher importance score for a variable allowing (1) a change in a decision, (2) a greater adversarial -- thus a solid change of decision. Modifying all variables gives us an idea of the robustness of the model.
In the same way, the modification of all variables without the subset $\vu$ allows quantifying the change of the strongest adversarial perturbation and thus quantifies the importance of the variables $\vu$. Intuitively, if an important variable $\vu$ is discarded, then it will be more difficult, if not impossible, to succeed in finding any adversarial perturbation. Specifically, removing the possibility to modify $\vx_{\vu}$ allows us to reveal its importance by taking into account its possible interactions.

The complexity of current models means that the variables are not only treated individually in neural network models but collectively. In order to capture these higher-order interactions, our method consists in measuring the \adv~ allowed by all the variables together $\AOup(\vx, \ball)$ -- thus taking into account their interactions -- and then forbidding to play on a group of variables $\AOup(\vx, \ballu)$ to estimate the importance of $\vu$. Making the interactions of $\vu$ disappear reveals their importance. Note that several works have mentioned the importance of taking into account the interactions of the variables when calculating the importance~\cite{petsiuk2018rise,fel2021sobol, ferrettini2021coalitional,idrissi2023coalitional}. Formally, we introduce \eva~(Explainability using Verified perturbation Analysis) that measure the drop in \adv~ when we fixed the variables $\vu$:
\vspace{-1mm}
\begin{equation}
    \label{eq:tod_estimator}
    \bm{\eva}(\vx, \vu, \ball) \equiv \AOup(\vx, \ball) - \AOup(\vx, \ballu).
\end{equation}
\vspace{-5mm}

As explained in Fig.~\ref{fig:eva}, the estimator requires two passes of the perturbation analysis method; one for $\AOup(\ball)$, and the other for $\AOup(\ballu)$: the first term consists in measuring the \adv~ by modifying all the variables, the second term measures the adversarial surface when fixing the variables of interest $\vu$.
In other words, \eva~measures the \adv~that would be left if the variables $\vu$ were to be fixed.

From a theoretical point of view, we notice that \eva~- under reasonable assumptions - yield the optimal subset of variables to minimize the theoretical \rsr~ metric (see Theorem~\ref{thm:rsr}).
From a computational point of view, we can note that the first term of the \adv~$\AOup(\vx, \ball)$ -- as it does not depend on $\vu$ -- can be calculated once and re-used to evaluate the importance of any other variables considered. 
Moreover, contrary to an iterative process method~\cite{Fong_2017, hsieh2020evaluations, ignatiev2019abduction}, each importance can be evaluated independently and thus benefit from the parallelization of modern neural networks. Finally, the experiments in Section~\ref{sec:experiments} show that even with two calls to $\AOup$~ per variables, our method remains much faster than the one based on sampling or on adversarial attacks (such as Greedy-AS or Greedy-AO, see appendix \ref{ap:benchmarks}). %

In this work, the verified perturbation-based analysis considered is not always adapted to high dimensional models, especially those running on ImageNet~\cite{imagenet_cvpr09}. We are confident that the verification methods will progress towards more scalability in the near future, enabling the original version of \eva~ on deeper models. 

In the meantime, we introduce an empirical method that allows to scale \eva\xspace to high dimensional models. This method sacrifices theoretical guarantees, but the results section reveals that it may be a good compromise.

\begin{figure*}[t!]
  \includegraphics[width=0.9\textwidth]{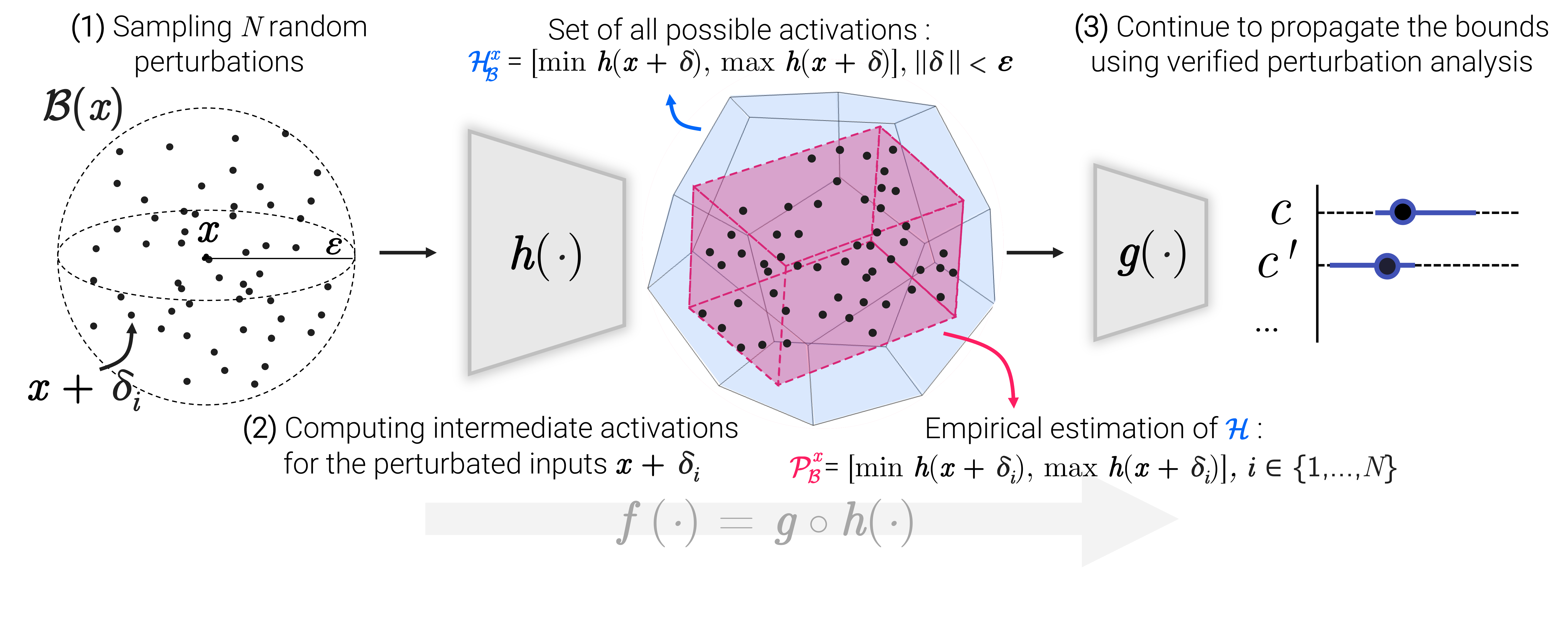}\vspace{-7mm}
  \caption{\textbf{Scaling strategy.} 
  In order to scale to very large models, we propose to estimate the bounds of an intermediate layer's activations empirically by (1) Sampling $N$ input perturbations and (2) calculating empirical bounds on the resulting activations for the layer $\bm{h}(\cdot)$. 
  We can then form the set $\textcolor{pink}{\mathcal{P}_{\ball}^{\vx}}$ which is a subset of the true bounds $\textcolor{indigo}{\mathcal{H}_{\ball}^{\vx}}$ since the sampling is never exhaustive. We can then plug this set into a verified perturbation analysis method (3) and continue the forward propagation of the inputs through the rest of the network.
  }
  \label{fig:eva_hybrid}
\end{figure*}

\subsection{Scaling to larger models}
\label{sec:scaling}

\vspace{-1mm}

We propose a second version of \eva, 
which is a combination of sampling and verification perturbation analysis. 
The aim of this hybrid method is twofold: (\textit{\textbf{i}}) take advantage of sampling to approach the bounds of an intermediate layer in a potentially very large model, (\textit{\textbf{ii}}) then complete only the rest of the propagations with verified perturbation analysis and thus move towards the native \eva\xspace method which benefits from theoretical guarantees.
Note that, combining verification methods with empirical methods (a.k.a adversarial training) has notably been proposed in~\cite{balunovic2019adversarial} for robust training.

Specifically, our technique consists of splitting the model into two parts, and (\textit{\textbf{i}}) estimating the bounds of an intermediate layer using sampling, (\textit{\textbf{ii}}) propagating these empirical intermediate bounds onto the second part of the model with verified perturbation analysis methods.

For the first step (\textit{\textbf{i}}) we consider the original predictor $\pred$ as a composition of functions 
$\pred(\vx) = \bm{g} \circ \bm{h}(\vx)$. 
For deep neural networks, $\bm{h}(\cdot)$ is a function that maps input to an intermediate feature space 
and $\bm{g}(\cdot)$ is a function that maps this same feature space to the classification.

\vspace{-1mm}
We propose to empirically estimate bounds 
$( \underline{\bm{h}}_{\ball}^{\vx}, \overline{\bm{h}}_{\ball}^{\vx} )$ for the intermediate activations $ \bm{h}(\cdot) \in \mathbb{R}^{d'}$ using Monte-Carlo sampling on the perturbation $\vd \in \ball$. Formally:
\vspace{-1mm}
\begin{equation*}
\begin{split}
    \forall j \in [0, \ldots, d'], ~
& \underline{\bm{h}}_{\ball}^{\vx}[j] = 
  \min\limits_{\vd_1,\ldots \vd_i,\ldots \vd_n \overset{\mathrm{iid}}{\sim} U(\mathcal{B})}
  \bm{h}(\vx+\vd_i)[j]\\
& \overline{\bm{h}}_{\ball}^{\vx}[j] =  
  \max\limits_{\vd_1,\ldots \vd_i,\ldots \vd_n  \overset{\mathrm{iid}}{\sim} U(\mathcal{B})} 
  \bm{h}(\vx+\vd_i)[j].
\end{split}
\end{equation*}

Obviously, since the sampling is never exhaustive, the bounds obtained underestimate the true maximum $ \overline{\bm{h}}_{\ball}^{\vx} \leq \max \bm{h}(\vx + \vd)$ and overestimates the true minimum $ \underline{\bm{h}}_{\ball}^{\vx} \geq \min \bm{h}(\vx + \vd)$ as illustrated in the Fig.~\ref{fig:eva_hybrid}.
In a similar way, we define $\underline{\bm{h}}_{\ballu}^{\vx}$ and $\overline{\bm{h}}_{\ballu}^{\vx}$ when  $\vd \in \ballu$. 
Once the empirical bounds are estimated, we may proceed to the second step and use the obtained bounds to form the new perturbation set 
\vspace{-2mm}
$$ 
\mathcal{P}^{\vx}_{\ball} = 
    [\underline{\bm{h}}_{\ball}^{\vx} - \bm{h}(\vx), 
    \overline{\bm{h}}_{\ball}^{\vx} - \bm{h}(\vx)].
$$ 
Intuitively, this set bounds the intermediate activations obtained empirically and can then be fed to a verified perturbation verification method.

We then carry out the end of the bounds propagation in the usual way, using verified perturbation analysis. This amounts to computing bounds for the outputs of the network for all possible activations contained in our empirical bounds. The only change is that we no longer operate in the pixel space $\vx$ with the ball $\ball$, but in the activation space $\bm{h}(\cdot)$  with the perturbations set $\mathcal{P}^{\vx}_{\ball}$. The importance score of a set of variables $\vu$ is then : 
\vspace{-2mm}
$$ \evaH(\vx, \vu, \ball) \equiv \eva(\bm{h}(\vx), \vu, \mathcal{P}^{\vx}_{\ball}).$$

\vspace{-1mm}
This hybrid approach allows us to use \eva~ on state-of-the-art models and thus to benefit from our method while remaining tractable. We believe this extension to be a promising step towards robust explanations on deeper networks.

\section{Experiments}
\label{sec:experiments}

\begin{table*}[t]
    \centering
    \scalebox{0.87}{
        \begin{tabular}{l C{0mm} P{0mm} P{5mm}P{5mm}P{5mm}P{5mm}P{5mm} P{1mm} P{5mm}P{5mm}P{9mm}P{5mm}P{5mm} P{1mm} P{5mm}P{5mm}P{5mm}P{5mm}P{5mm}}
        \toprule
        &&& \multicolumn{5}{c}{MNIST} &&  \multicolumn{5}{c}{Cifar-10} &&  \multicolumn{5}{c}{ImageNet}  \\
        \cmidrule(lr){4-8} \cmidrule(lr){10-14} \cmidrule(lr){16-20}
        
        &&& Del.$\downarrow$ & Ins.$\uparrow$ & Fid.$\uparrow$ & Rob.$\downarrow$ & Time 
        && Del.$\downarrow$ & Ins.$\uparrow$ & Fid.$\uparrow$ & Rob.$\downarrow$ & Time 
        && Del.$\downarrow$ & Ins.$\uparrow$ & Fid.$\uparrow$ & Rob.$\downarrow$ & Time
        \\
        
        \midrule
        Saliency\cite{simonyan2013deep} &   && .193 & \underline{.633} & \underline{.378} & .071 & 0.04
                 && \underline{.171} & .172 & -.021 & .026 & 0.16
                 && \underline{.057} & .126 & .035 & .769 & 0.36 \\
        GradInput\cite{ancona2017better} &   && .222 & .611 & .107 & .074 & 0.04 
                 && .200 & .143 & -.018 & .095 & 0.17
                 && \underline{.057} & .050 & .023 & .814 & 0.36  \\
        SmoothGrad\cite{smilkov2017smoothgrad} &   && .185 & .621 & .331  & .070 & 1.91
                 && .174 & .181 & .092 & .048 & 9.07
                 && \textbf{.051} & .069 & .019 & .809 & 9.63 \\
        VarGrad\cite{seo2018noise} &   && .207   & .555   & .216   & .077   & 1.76 
                 && .183 & .211 & -.012 & .193 & 9.07
                 && .098 & .201 & .021 & .787 &  9.62 \\
        InteGrad\cite{sundararajan2017axiomatic} &   && .209 & .615 & .108 & .074 & 1.77
                 && .194 & .171 & -.016 & .154 & 7.19
                 && .058 & .052 & .023 & .813 & 8.39 \\
        Occlusion\cite{ancona2017better} &   && .247  & .545  & .137  & .082  & 0.04 
                 && .217 & \underline{.290} & .105 & .232 & 1.13
                 && .100 & .266 & .026 & .821 & 4.97 \\
        GradCAM\cite{Selvaraju_2019} &   && n/a &  n/a  &  n/a  &  n/a  &  n/a  
                 && .297 & .282 & .056 & .195 & 0.39
                 && .073 & .232 & .036 & .817 & 0.18 \\
        GradCAM++\cite{chattopadhay2018grad} &   && n/a  & n/a & n/a & n/a & n/a
                 && .270 & \textbf{.326} & .102 & .094 & 0.39
                 && .074 & \underline{.285} & \underline{.054} & .800 & 0.19 \\
        RISE\cite{petsiuk2018rise}     &   && .248  & .558 & .133  & .093  & 2.26 
                 && .196 & .273  & \underline{.157} & .385 & 20.5
                 && .074 & .276 & \textbf{.154} & .818 & 1215 \\
        Greedy-AS\cite{hsieh2020evaluations} &   && .260  & .497  & .110  & \textbf{.061}  & 335
                 && .205 & .264 & -.003 & \textbf{.013} & 4618
                 && .088 & .047 & .023 & \textbf{.612} & 180056  \\
        \midrule 
        \textbf{\eva}~(ours)     & && \textbf{.089} & \textbf{.736} & \textbf{.428} & \underline{.069} & 1.29 
                 && \textbf{.164} & \underline{.290} & \textbf{.352} & \underline{.025} & 12.7 
                 && .070 & \textbf{.289} & .048 & \underline{.758} & 6454
                 \\
        
        \bottomrule \\
        \end{tabular}
    }
    \caption{Results on Deletion (Del.), Insertion (Ins.), $\mu$Fidelity (Fid.) and \rsr~ (Rob.) metrics. 
Time in seconds corresponds to the generation of 500 (MNIST/CIFAR-10) and 100 (ImageNet) explanations on an Nvidia P100.
Note that \eva~is the only method with guarantees that the entire set of possible perturbations has been exhaustively searched.
Verified perturbation analysis with IBP + Forward + Backward is used for MNIST, with Forward only for  CIFAR-10 and with our hybrid strategy described in Section.\ref{sec:scaling} for ImageNet. 
Grad-CAM and Grad-CAM++ are not calculated on the MNIST dataset since the network  only has dense layers. 
The first and second best results are  in \textbf{bold} and \underline{underlined}, respectively. 
}
    \label{tab:cifar_mnist_metrics}
    \vspace{-3mm}
\end{table*}

To evaluate the benefits and reliability of our explainability method, we performed several experiments on a standard dataset, using a set of common explainability metrics against \eva.
In order to test the fidelity of the explanations produced by our method, we compare them to that of 10 other explainability methods using the (1) Deletion, (2) Insertion, and (3) MuFidelity metrics. As it has been shown that these metrics can exhibit biases, we completed the benchmark by adding the (4) \rsr\xspace metric. Each score is averaged over 500 samples.

We evaluated these 4 metrics on 3 image classification datasets, namely MNIST~\cite{lecun2010mnist}, CIFAR-10~\cite{krizhevsky2009learning} and ImageNet~\cite{imagenet_cvpr09}.
Through these experiments, the explanations were generated using \eva~estimator introduced in Equation~\ref{eq:tod_estimator}. The importance scores were not evaluated pixel-wise but on each cell of the image after having cut it into a grid of 12 sides (see Fig.~\ref{fig:eva}). For MNIST and Cifar-10, we used 
$\varepsilon = 0.5$, whereas for ImageNet $\varepsilon = 5$. Concerning the verified perturbation analysis method, we used (IBP+Forward+Backward) for MNIST, and (IBP+Forward) on Cifar-10 and $p=\infty$. For computational purposes, we used the hybrid approach introduce in Section~\ref{sec:scaling} for ImageNet using the penultimate layer (FC-4096) as the intermediate layer $\bm{h}(\cdot)$. We give in Appendix the complete set of hyperparameters used for the other explainability methods, metrics considered as well as the architecture of the models used on MNIST and Cifar-10.

\subsection{Comparison with the state of the art}

There is a general consensus that fidelity is a crucial criterion for an explanation method. That is, if an explanation is used to make a critical decision, then users are expecting it to reflect the true decision-making process underlying the model and not just a consensus with humans. Failure to do so could have disastrous consequences. Pragmatically, these metrics assume that the more faithful an explanation is, the faster the prediction score should drop when pixels considered important are changed.
In Table~\ref{tab:cifar_mnist_metrics}, we present the results of the Deletion~\cite{petsiuk2018rise} (or $1 - AOPC$~\cite{samek2016evaluating}) metric for the MNIST and Cifar-10 datasets on 500 images sampled from the test set. TensorFlow~\cite{tensorflow2015} and the Keras API~\cite{chollet2015keras} were used to run the models and Xplique~\cite{fel2021xplique} for the explainability methods.
In order to evaluate the methods, the metrics require a baseline and several were proposed~\cite{sturmfels2020visualizing, hsieh2020evaluations}, but we chose to keep the choice of~\cite{hsieh2020evaluations} using their random baseline.
 
We observe that \eva~is the explainability method getting the best Deletion, Insertion, and $\mu$Fidelity scores on MNIST, and is just behind Greedy-AS on \rsr. This can be explained by the fact that the Robustness metric uses the adversarial attack PGD~\cite{madry2017pgd}, which is the same one used to generate Greedy-AS, thus biasing the adversarial search. Indeed, if PGD does not find an adversarial perturbation using a subset $\vu$ does not give a guarantee of the robustness of the model, just that the adversarial perturbation could be difficult to reach with PGD.

For Cifar-10, \eva~remains overall the most faithful method according to Deletion and $\mu$Fidelity, and obtains the second score in Insertion behind Grad-Cam++~\cite{chattopadhay2018grad}. 
Finally, we notice that if Greedy-AS~\cite{hsieh2020evaluations} allows us to obtain a good \rsr\xspace score, but this comes with a considerable computation time, which is not the case of \eva\xspace which is much more efficient. Eventually, EVA is a very good compromise for its relevance to commonly accepted explainability metrics and more recent robustness metrics.

\vspace{-4mm}
\paragraph{ImageNet}

After having demonstrated the potential of the method on vision datasets of limited size, we consider the case of ImageNet which has a significantly higher level of dimension.
The use of verified perturbation analysis methods other than IBP is not easily scalable on these datasets. We, therefore, used the hybrid method introduced in Section ~\ref{sec:scaling} in order to estimate the bounds in a latent space  and then plug those bounds into the perturbation analysis to get the final \adv~ score.

Table~\ref{tab:cifar_mnist_metrics} shows the results obtained with the empirical method proposed in Section~\ref{sec:scaling}. We observe that even with this relaxed estimation, \eva~is able to score high on all the metrics. Indeed, \eva~obtains the best score on the Insertion metric and ranks second on $\mu$Fidelity and 
\rsr.
Greedy-AS ranks first on \rsr\xspace at the expense of the other scores where it performs poorly. Finally, both RISE and SmoothGrad perform  well on all the fidelity metrics but collapse on the robustness metric. Extending results with ablations of \eva, including Greedy-AO, are available in Table~\ref{tab:ablation_ao}.

Qualitatively, Fig.~\ref{fig:imagenet_explanations} shows examples of explanations produced on the ImageNet VGG-16 model. The explanations produced by \eva~ are more localized than Grad-CAM or RISE, while being less noisy than the gradient-based or Greedy-AS methods.

In addition, as the literature on verified perturbation analysis is evolving rapidly we can conjecture that the advances will benefit the proposed explainability method.
Indeed, \eva~proved to be the most effective on the benchmark when an accurate formal method was used.
After demonstrating the performance of the proposed method, we study its ability to generate class explanations specific. %

\subsection{Tighter bounds lead to improved explanations}
\label{sec:vpa_ablation}

\vspace{-1mm}
\begin{table}[h]
    \centering
    \scalebox{0.9}{
        \begin{tabular}{l c P{5mm}P{5mm}P{5mm}P{5mm}}
        \toprule
        & Tightness$\downarrow$ & Del.$\downarrow$ & Ins.$\uparrow$ & Fid.$\uparrow$ & Rob.$\downarrow$ \\
        \midrule
        IBP & 4.58 & .148 & .588 & .222 & .077 \\
        Forward & 2.66 & .150 & .580 & .209 & .078 \\
        Backward & \underline{2.36} &  \underline{.115} & \underline{.607} & \underline{.274} & \underline{.074} \\
        IBP + Fo. + Ba. & \textbf{1.55} & \textbf{.089} & \textbf{.736} & \textbf{.428} & \textbf{.069} \\
        \bottomrule
        \vspace{-4mm}
        \end{tabular}
    }
    \caption{\textbf{Impact of the verified perturbation analysis method on EVA.}
    Results of \eva on Tightness, Deletion (Del.), Insertion (Ins.), Fidelity (Fid.) and $\rsr$ (Rob.) metrics obtained on MNIST. The Tightness score corresponds to the average adversarial surface. A lower Tightness score indicates that the method is more precise: it reaches tighter bound, resulting in better explanations and superior scores on the other metrics. The first and second best results are respectively in \textbf{bold} and \underline{underlined}.
    \vspace{-4mm}
    }
    \label{tab:ablation_vpa}
\end{table}

The choice of the verified perturbation analysis method is a hyperparameter of EVA.
Hence, it is interesting to see the effect of the choice of this hyperparameter on the previous benchmark. 
We recall that only the MNIST dataset could benefit from the (IBP+Forward+Backward) combo. Table~\ref{tab:ablation_vpa}  reports the results of the fidelity metrics using other verified perturbation analysis methods. We also report a tightness score which corresponds to the average of the \adv~: 
$\mathbb{E}_{\vx \sim \mathcal{X}}(\AOup(\vx, \ball))$.
Specifically, a low score indicates that the verification method is precise, meaning that the over-approximation is closer to the actual value. It should be noted that the true value is intractable, but remains the same across all three tested cases. We observe that the tighter the bounds, the higher the scores. This allows us to conjecture that the more scalable the formal methods will become, the better the quality of the generated explanations will be.
We perform additional experiments to ensure that the certified component of \eva\xspace score is significant by comparing \eva\xspace to 
a sampling-based version of \eva. The details of these experiments are available in Appendix~\ref{ap:benchmarks}.

\subsection{Targeted Explanations}
\label{sec:targeted_explanations}

\begin{figure}[t!]
  \includegraphics[width=0.45\textwidth]{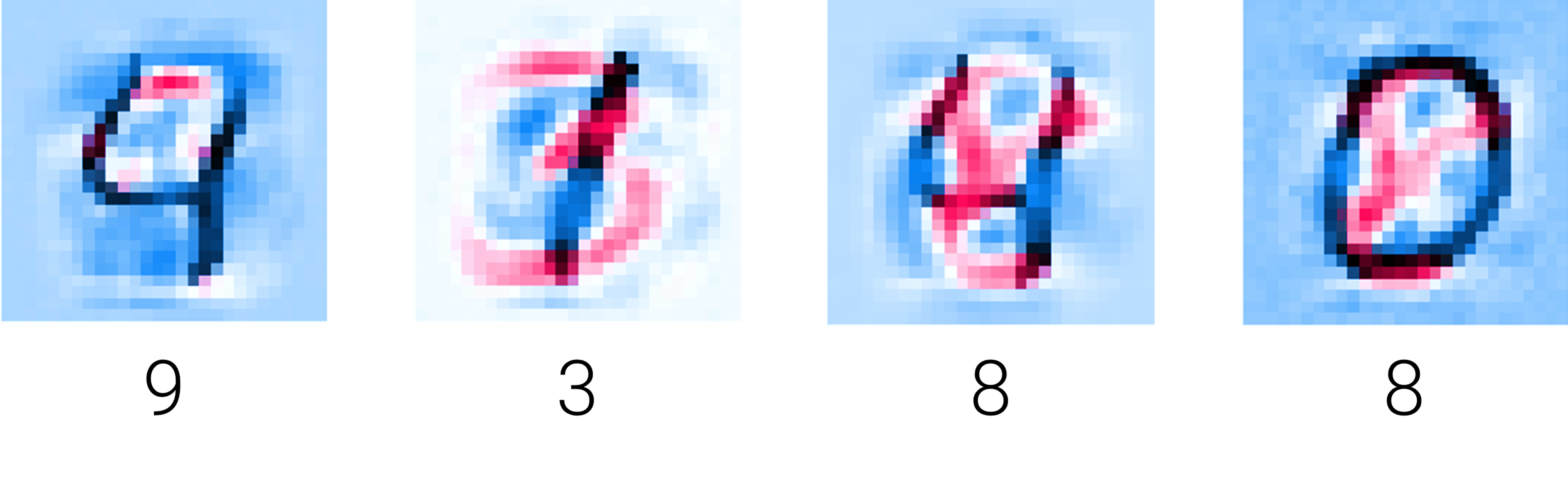}\vspace{-5mm}
  \caption{\textbf{Targeted explanations.} Generated explanations for a decision other than the one predicted by the model. The class explained is indicated at the bottom of each sample, e.g., the first sample is a `4' and the explanation is for the class `9'. As indicated in section~\ref{sec:targeted_explanations}, the red areas indicate that a black line should be added and the blue areas that it should be removed. More examples are available in the Appendix.
  }
  \vspace{-4mm}
  \label{fig:targeted_explanations}
\end{figure}

In some cases, it is instructive to look at the explanations for unpredicted classes in order to get information about the internal mechanisms of the models studied. Such explanations allow us to highlight contrastive features: elements that should be changed or whose absence is critical. 
Our method allows us to obtain such explanations: for a given input, we are then exclusively interested in the class we are trying to explain, without looking at the other decisions. Formally, for a given targeted class $c'$
the \adv~ (Equation~\ref{eq:adv_surface}) become $\AO(\vx, \mathcal{B}) = \max_{\substack{\vd \in \mathcal{B}}} \pred_{c'}(\vx + \vd) - \pred_c(\vx + \vd)$. Moreover, by splitting the perturbation ball into a positive one $\ball^{(+)} = \big\{ \vd \in \ball ~:~ \vd_i \geq 0,~ \forall i \in \{1, ..., d \} \big\}$ and a negative one $\ball^{(-)} = \big\{ \vd \in \ball ~:~ \vd_i \leq 0,~ \forall i \in \{1, ..., d \} \big\}$, one can deduce which direction -- adding or removing the black line in the case of gray-scaled images -- will impact the most the model decision.

We generate targeted explanations on the MNIST dataset using (IBP+Forward+Backward). For several inputs, we generate the explanation for the 10 classes. Fig.~\ref{fig:ap_targeted} shows 4 examples of targeted explanations, the target class $c'$ is indicated at the bottom. The red areas indicate that adding a black line increases the \adv~ with the target class. Conversely, the blue areas indicate where the increase of the score requires removing black lines.
All other results can be found in the Appendix.
In addition to favorable results on the fidelity metrics and guarantees provided by the verification methods, \eva~can provide targeted explanations that are easily understandable by humans, which are two qualities that make them a candidate of choice to meet the recent General Data Protection Regulation (GDPR) adopted in Europe~\cite{kaminski2021right}. More examples are available in the Appendix~\ref{ap:targeted}.
\vspace{-2.2mm}
\section{Conclusion}

In this work, we presented the first explainability method that uses verification perturbation analysis that exhaustively explores the perturbation space to generate explanations. We presented an efficient estimator that yields explanations that are state-of-the-art on current metrics. We also described a simple strategy to scale up perturbation verification methods to complex models. Finally, we showed that this estimator can be used to form easily interpretable targeted explanations.

We hope that this work will %
for searching for safer and more efficient explanation methods for neural networks -- and that it will inspire further synergies with the field of formal verification.

\newpage
\balance
{\small
\bibliographystyle{ieee_fullname}
\bibliography{egbib}
}

\appendix
\clearpage

\section{Qualitative comparison}

Regarding the visual consistency of our method, Figure~\ref{fig:imagenet_explanations} shows a side-by-side comparison between our method and the attribution methods  tested in our benchmark. 
To allow better visualization, the gradient-based
methods were 2 percentile clipped.

\begin{figure*}[!ht]
  \centering
  \includegraphics[width=0.99\textwidth]{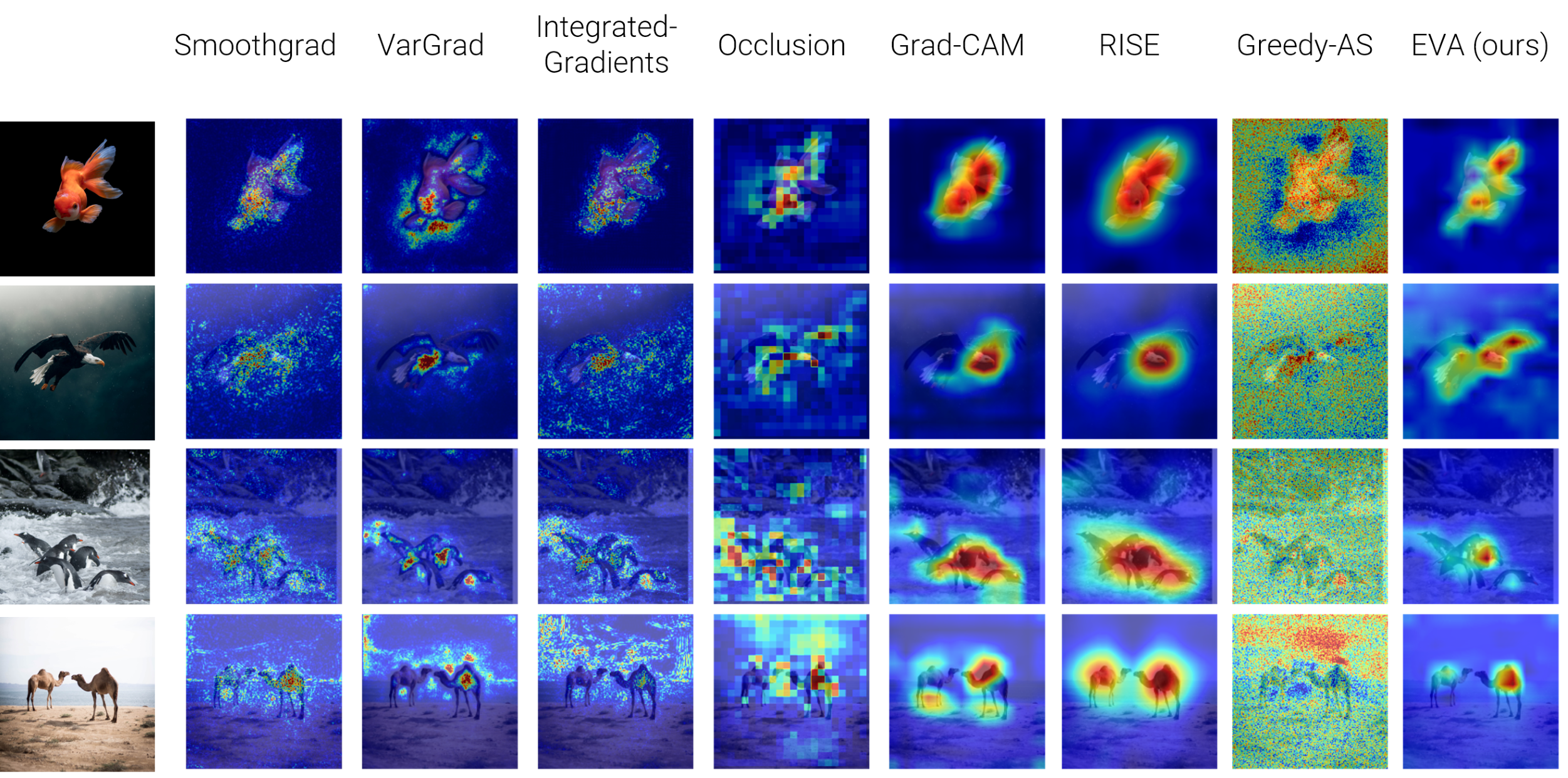}
  \caption{\textbf{Qualitative comparison} with other attribution methods. To allow for better visualization, the gradient-based methods (Saliency, Gradient-Input, SmoothGrad, Integrated-Gradient, VarGrad) are clipped at the 2nd percentile. For more results and details on each method and choice of hyperparameters, see Appendix.
  }
  \vspace{-2mm}
  \label{fig:imagenet_explanations}
\end{figure*}

\section{Ablation studies}
\label{ap:benchmarks}

\begin{table*}[t]
    \centering
    \scalebox{0.92}{
        \begin{tabular}{l C{0mm} P{0mm} P{5mm}P{5mm}P{5mm}P{5mm}P{5mm} P{1mm} P{5mm}P{5mm}P{9mm}P{5mm}P{5mm} P{1mm} P{5mm}P{5mm}P{5mm}P{5mm}P{5mm}}
        \toprule
        &&& \multicolumn{5}{c}{MNIST} &&  \multicolumn{5}{c}{Cifar-10} &&  \multicolumn{5}{c}{ImageNet}  \\
        \cmidrule(lr){4-8} \cmidrule(lr){10-14} \cmidrule(lr){16-20}
        
        &&& Del.$\downarrow$ & Ins.$\uparrow$ & Fid.$\uparrow$ & Rob.$\downarrow$ & Time 
        && Del.$\downarrow$ & Ins.$\uparrow$ & Fid.$\uparrow$ & Rob.$\downarrow$ & Time 
        && Del.$\downarrow$ & Ins.$\uparrow$ & Fid.$\uparrow$ & Rob.$\downarrow$ & Time
        \\
        
        \midrule
        Greedy-AS\cite{hsieh2020evaluations} &   && .260  & .497  & .110  & \textbf{.061}  & 335
                 && .205 & .264 & -.003 & \textbf{.013} & 4618
                 && .088 & .047 & .023 & \textbf{.612} & 180056  \\
        \midrule 
        Greedy-AO &&& .237 & .572 & .244 & \underline{.063} & 290
                 && \textbf{.162} & .283 & .041 & .024 & 2874
                 && .086 & .050 & .023 & \underline{.752} & 26762 \\
        \evaEmp  &&& \underline{.101} & .621 & \underline{.378} & .067 & 14.4 
                 && .184 & .270 & \textbf{.397} & \underline{.022} & 186.6
                 && \multirow{ 2}{*}{.070} & \multirow{ 2}{*}{\textbf{.289}} & \multirow{ 2}{*}{.048} & \multirow{ 2}{*}{.758} & \multirow{ 2}{*}{6454} \\ 
        \textbf{\eva} ~(ours)     & && \textbf{.089} & \textbf{.736} & \textbf{.428} & \underline{.069} & 1.29 
                 && \textbf{.164} & \underline{.290} & \textbf{.352} & \underline{.025} & 12.7 
                 \\
        
        \bottomrule \\
        \end{tabular}
    }
    \caption{
Results on Deletion (Del.), Insertion (Ins.), $\mu$Fidelity (Fid.) and \rsr~ (Rob.) metrics. 
Time in seconds corresponds to the generation of 100 explanations on an Nvidia P100.
Note that \eva~is the only method with guarantees that the entire set of possible perturbations has been exhaustively searched.
Verified perturbation analysis with IBP + Forward + Backward is used for MNIST, with Forward only for  CIFAR-10 and with our hybrid strategy described in Section.\ref{sec:scaling} for ImageNet. 
Grad-CAM and Grad-CAM++ are not calculated on the MNIST dataset since the network  only has dense layers. 
Greedy-AO is the equivalent of Greedy-AS but with the \AO estimator. 
The first and second best results are  in \textbf{bold} and \underline{underlined}, respectively. 
}
    \label{tab:ablation_ao}
    \vspace{-3mm}
\end{table*}

For a more thorough understanding of the impact of the different components that made EVA - the adversarial overlap and the use of verification tools- we proposed different ablation versions of EVA which are the following:
(\textbf{\textit{i}}) Empirical EVA, (\textbf{\textit{ii}}) GreedyAO which is the equivalent of Greedy-AS but with the $\AO$ estimator. This allow us to perform ablation on the proposed $\AO$~estimator. Results can be found in Table~\ref{tab:ablation_ao}.
\subsection{Empirical EVA.}
    
In this section, we describe the ablation consisting in estimating \eva~ without any use of verified perturbation analysis -- thus without any guarantees.

A first intuitive approach would be to replace verification perturbation analysis with adversarial attacks (as used in \textit{Greedy-AS}~\cite{hsieh2020evaluations}); we denote this approach as \textit{Greedy-AO}. 
In addition, we go further with a purely statistical approach based on a uniform sampling of the domain; we denote this approach \evaEmp. 
    
This estimator proves to be a very good alternative in terms of computation time but also with respect to the considered metrics as shown in Section ~\ref{sec:experiments}. Unfortunately the lack of guarantee makes it not as relevant as \eva.
Formally, it consists in directly estimating empirically \AO  using $N$ randomly sampled perturbations.
    
\begin{equation}\label{eq:adv_empirique}
    \AOemp(\vx, \mathcal{B}) = 
    \max_{\substack{\vd_1,\cdots \vd_i,\cdots \vd_N  \overset{\mathrm{iid}}{\sim} U(\mathcal{B})\\c'\neq{}c}} \pred_{c'}(\vx + \vd_i) - \pred_c(\vx + \vd_i).
\end{equation}
    
We then denote accordingly \evaEmp which uses $\AOemp$:

\begin{equation}
    \label{eq:tod_estimator_emp}
    \evaEmp(\vx, \vu, \ball) = \AOemp(\vx, \ball) - \AOemp(\vx, \ballu)
\end{equation}

\section{\eva~and $\rsr$}

We show here that the explanations generated by \eva~  provide an optimal solution from a certain stage to the $\rsr$ metric proposed by~\cite{hsieh2020evaluations}. We admit a unique closest adversarial perturbation $\vd^* = \min ||\vd||_p : \pred(\vx + \vd) \neq \pred(\vx)$, and we define $\varepsilon$, the radius of $\ball$ as $\varepsilon = ||\vd||_p$. 
Note that $||\vd||_p$ can be obtained by binary search using the verified perturbation analysis method.

We briefly recall the $\rsr$ metric. With $\vx = (x_1, ..., x_d)$, the set $\mathcal{U} = \{1, ..., d\}$, $\vu$ a subset of $\mathcal{U}$ : $\vu \subseteq \mathcal{U}$ and $\overline{\vu}$ its complementary. Moreover, we denote the minimum distance to an adversarial example $\varepsilon^*_{\vu}$: 
$$ \varepsilon^*_{\vu} = \big\{ \min || \vd ||_p ~:~ \pred(\vx + \vd) \neq \pred(\vx), \vd_{\overline{\vu}} = 0  \big\} $$ 

The $\rsr$ score is the AUC of the curve formed by the points $\{ (1,  \varepsilon^{*}_{(1)}), ..., (d,  \varepsilon^{*}_{(d)})  \}$ where $\varepsilon^{*}_{(k)}$ is the minimum distance to an adversarial example for the $k$ most important variables.
From this, we can deduce that $||\vd^*|| \leq \varepsilon^*_{\vu}$, $\forall \vu \subseteq \{1, ..., d\}$.

The goal here is to minimize this score, which means for a number of variables $|\vu| = k$, finding the set of variables $\vu^*$ such that $\varepsilon^*_{\vu}$ is minimal. We call this set the \textit{optimal set at $k$}. 

\begin{definition}
The \textit{optimal set at $k$} is the set of variables $\vu^{*}_k $ such that 
$$ \vu^{*}_k = \underset{ \vu \subseteq \mathcal{U},~ |\vu| = k}{\argmin ~~ \varepsilon^*_{\vu} }. $$
\end{definition}

We note that finding the minimum cardinal of a variable to guarantee a decision is also a standard research problem  ~\cite{ignatiev2019abduction, ignatiev2019relating} and is called subset-minimal explanations. 

Intuitively, the optimal set is the combination of variables that allows finding the closest adversarial example.
Thus, minimizing $\rsr$ means finding the optimal set $\vu^*$ for each $k$. 
Note that this set can vary drastically from one step to another, it is therefore potentially impossible for attribution to satisfy this optimality criterion at each step.
Nevertheless, an optimal set that is always reached at some step is the one allowing to build $\vd^*$.
We start by defining the notion of an essential variable before showing the optimality of $\vd^*$.

\begin{definition}
Given an adversarial perturbation $\vd$, we call \textit{essentials variables} $\vu$ all variables such that $|\vd_{i}| > 0, i \in \vu$. Conversely, we call \textit{inessentials variables} variables that are not essential.
\end{definition}

For example, if $\vd^*$ has $k$ \textit{essential variables}, it is reachable by modifying only $k$ variables. 
This allows us to characterize the optimal set at step $k$.

\begin{proposition} 
\label{prop:uoptimal}
Let $\vu$ be the set of essential variables of $\vd^*$, then $\vu$ is an optimal set for $k$, with $k \in [\![|\vu|,d]\!] $.
\end{proposition}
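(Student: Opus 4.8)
The plan is to exploit the one defining property of the essential set $\vu$: since $\vu$ collects exactly the coordinates on which $\vd^*$ is nonzero, we have $\vd^*_{\overline{\vu}} = 0$ while still $\pred(\vx + \vd^*) \neq \pred(\vx)$. Hence $\vd^*$ is an admissible candidate in the constrained minimization defining $\varepsilon^*_{\vu}$, which immediately gives $\varepsilon^*_{\vu} \le \|\vd^*\|_p$.

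Next I would invoke the lower bound already recorded above the statement, $\|\vd^*\|_p \le \varepsilon^*_{\vu'}$ for every $\vu' \subseteq \mathcal{U}$ — simply the observation that the unconstrained minimum over all adversarial perturbations cannot exceed a minimum taken over a restricted feasible set. Specializing to $\vu' = \vu$ and combining with the previous inequality yields the equality $\varepsilon^*_{\vu} = \|\vd^*\|_p$. For $k = |\vu|$, since $\varepsilon^*_{\vu'} \ge \|\vd^*\|_p$ for every $\vu'$ of cardinality $k$, the set $\vu$ attains that infimum and is therefore an optimal set at level $|\vu|$.

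For the remaining range $|\vu| < k \le d$, I would pad $\vu$ with $k - |\vu|$ arbitrary inessential coordinates to form $\vu' \supseteq \vu$ with $|\vu'| = k$. Then $\overline{\vu'} \subseteq \overline{\vu}$ forces $\vd^*_{\overline{\vu'}} = 0$, so $\vd^*$ is again admissible for the problem defining $\varepsilon^*_{\vu'}$, giving $\varepsilon^*_{\vu'} \le \|\vd^*\|_p$; with the universal lower bound this is an equality. Thus the optimal value $\min_{|\vu'| = k} \varepsilon^*_{\vu'}$ equals $\|\vd^*\|_p$ for every $k \ge |\vu|$ and is witnessed by $\vu$ (extended as needed), which is exactly the claim.

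I do not anticipate a genuine obstacle: the argument is a short sandwiching. The only points deserving care are (i) stating cleanly that ``$\vd^*$ is adversarial and vanishes off $\vu$'' is precisely the feasibility condition in the definition of $\varepsilon^*$, and (ii) being explicit about what ``optimal set for $k$'' means when $k > |\vu|$ — the natural reading being that a size-$k$ set containing $\vu$ is optimal, equivalently that the optimal distance stops improving past step $|\vu|$. The assumed uniqueness of $\vd^*$ is needed only so that ``the set of essential variables of $\vd^*$'' is well defined; the chain of inequalities itself does not use it.
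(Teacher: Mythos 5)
Your proof is correct and follows essentially the same route as the paper's: the paper argues by contradiction that no set $\vv$ can achieve $\varepsilon^*_{\vv} < \varepsilon^*_{\vu}$, which is exactly your sandwich $\varepsilon^*_{\vu} \le \|\vd^*\|_p \le \varepsilon^*_{\vv}$ built from the feasibility of $\vd^*$ under the $\vu$-constraint and the universal lower bound. You are merely more explicit than the paper about the two points it leaves implicit --- that $\vd^*_{\overline{\vu}}=0$ makes $\vd^*$ admissible for $\varepsilon^*_{\vu}$, and that padding $\vu$ with inessential coordinates handles the range $|\vu| < k \le d$ --- which is a welcome tightening rather than a different argument.
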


\begin{proof}
Let $\vv$ be a set such that $ \varepsilon^*_{\vv} < \varepsilon^*_{\vu} $, then $ \varepsilon^*_{\vv} < || \vd^* || $ which is a contradiction.
\end{proof}

Specifically, as soon as we have the variables allowing us to build $\vd^*$, then we reach the minimum possible for $\rsr$.
We will now show that \eva~allows us to reach this in $|\vu|$ steps, with $|\vu| \leq d$ by showing (1) that $\vd^*$ \textit{essential variables} obtain a positive attribution and (2) that $\vd^*$ \textit{inessential variables} obtain a zero attribution.

\begin{proposition}
\label{prop:ess}
All essential variables \(\vu\) w.r.t \(\vd^*\) have a strictly positive importance score \(\eva(\vu) > 0\). 
\end{proposition}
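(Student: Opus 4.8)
The plan is to show that fixing an essential variable strictly reduces the achievable adversarial overlap, so that the difference defining $\eva$ in Eq.~\eqref{eq:tod_estimator} is strictly positive. Recall that $\varepsilon = \|\vd^*\|_p$, so $\vd^* \in \ball$ and it changes the decision; hence $\pred_{c'}(\vx + \vd^*) - \pred_c(\vx + \vd^*) \geq 0$ for some $c' \neq c$, which gives $\AO_c(\vx,\ball) \geq 0$. Since $\AOup(\vx,\ball) \geq \AO_c(\vx,\ball)$, the first term of $\eva(\vu)$ is nonnegative. The real content is to argue that the second term $\AOup(\vx,\ballu)$ is \emph{strictly smaller} than the first whenever $\vu$ contains only essential variables of $\vd^*$, i.e.\ $\vd^*$ itself is forbidden in $\ballu$ and no comparably-strong adversarial perturbation survives.

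First I would make precise the claim at the level of the true criterion $\AO$ rather than its certified bound. Since every coordinate of $\vd^*$ restricted to $\vu$ is strictly nonzero (definition of essential variables), we have $\vd^* \notin \ballu$; more strongly, because $\varepsilon$ was chosen exactly as $\|\vd^*\|_p$ and $\vd^*$ is the \emph{unique} closest adversarial perturbation, any $\vd \in \ballu$ has $\pred(\vx+\vd) = \pred(\vx)$ — otherwise $\vd$ would be another adversarial perturbation with $\|\vd\|_p \leq \varepsilon = \|\vd^*\|_p$, contradicting uniqueness (or, if $\|\vd\|_p = \varepsilon$, contradicting $\vd \ne \vd^*$ since $\vd_{\overline{\vu}}$ need not vanish — here one uses that the essential variables are precisely the support of $\vd^*$, so a norm-$\varepsilon$ perturbation supported off $\vu$ cannot equal $\vd^*$). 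Consequently $\AO_c(\vx,\ballu) < 0$: the margin $\pred_{c'}(\vx+\vd) - \pred_c(\vx+\vd)$ stays strictly negative on the compact set $\ballu$, so its maximum is some $-\eta$ with $\eta > 0$. Meanwhile $\AO_c(\vx,\ball) \geq 0$ as above. I would then need to transfer this strict gap from $\AO$ to $\AOup$, using that $\AOup$ is a valid upper bound ($\AOup \geq \AO$ always) together with the fact that on $\ballu$ the verified bounds are at least as informative — but this is where care is needed, because $\AOup(\vx,\ballu)$ could in principle be loose and exceed $\AOup(\vx,\ball)$.

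\textbf{The main obstacle} is exactly this last point: $\eva(\vu) = \AOup(\vx,\ball) - \AOup(\vx,\ballu)$ is a difference of two \emph{over}-approximations, and an inequality between the true quantities $\AO(\vx,\ball) \geq 0 > \AO(\vx,\ballu)$ does not automatically lift to the certified surrogates. To close this I would invoke a monotonicity / consistency property of the verified perturbation analysis method: since $\ballu \subseteq \ball$, a sound analyzer satisfies $\underline{\pred}_c(\vx,\ballu) \geq \underline{\pred}_c(\vx,\ball)$ and $\overline{\pred}_{c'}(\vx,\ballu) \leq \overline{\pred}_{c'}(\vx,\ball)$ for every class, hence $\AOup(\vx,\ballu) \leq \AOup(\vx,\ball)$ unconditionally — giving $\eva(\vu) \geq 0$ for free. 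To upgrade to \emph{strict} positivity I would additionally assume (as the proposition's ``reasonable assumptions'' surely intend) that the chosen analyzer is tight enough to certify the decision on $\ballu$, i.e.\ $\AOup(\vx,\ballu) < 0$ when in fact no adversarial example exists there; combined with $\AOup(\vx,\ball) \geq \AO_c(\vx,\ball) \geq 0$, this yields $\eva(\vu) = \AOup(\vx,\ball) - \AOup(\vx,\ballu) > 0$, as desired.

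Finally I would assemble the three ingredients in order: (i) $\vd^* \in \ball$ is adversarial, so $\AOup(\vx,\ball) \geq \AO_c(\vx,\ball) \geq 0$; (ii) by uniqueness of $\vd^*$ and $\vu$ being its support, $\ballu$ contains no adversarial perturbation, so under the tightness assumption $\AOup(\vx,\ballu) < 0$; (iii) subtract to get $\eva(\vu) > 0$. I would also remark that if $\vu$ is only a strict subset of the essential variables the argument still shows $\eva(\vu) \geq 0$ but possibly not strict, which is why the statement is phrased for the full essential set; and I would flag that the quantitative gap is at least $|\AO_c(\vx,\ballu)| = \eta > 0$ plus the (nonnegative) adversarial margin at $\vd^*$, tying back to the ``greater adversarial, thus solid change of decision'' intuition from the $\eva$ motivation.
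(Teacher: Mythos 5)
Your proof is correct and rests on the same two pillars as the paper's: the uniqueness of the closest adversarial perturbation $\vd^*$, and the fact that $\vd^* \notin \ballu$ because essential variables are exactly the support of $\vd^*$. The structural difference is that the paper argues by contradiction — it assumes $\eva(i)=0$, deduces that the maxima over $\ball$ and $\ball_i$ coincide, and concludes that the maximizers must both be $\vd^*$, which is impossible since $\vd^*_i\neq 0$ — whereas you give a direct argument: $\AO(\vx,\ball)\geq 0$ because $\vd^*\in\ball$ is adversarial, while $\AO(\vx,\ballu)<0$ because uniqueness forbids any adversarial perturbation in $\ballu$, so the gap is strict. The more substantial divergence is that you explicitly confront the distinction between the true criterion $\AO$ and its certified surrogate $\AOup$: you correctly observe that a strict inequality between the exact maxima does not automatically transfer to a difference of over-approximations, you recover $\eva(\vu)\geq 0$ from monotonicity of sound bounds under $\ballu\subseteq\ball$, and you isolate the extra tightness hypothesis ($\AOup(\vx,\ballu)<0$ whenever no adversarial example exists in $\ballu$) needed for strictness. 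The paper's proof silently works with the exact maxima throughout (its $\bm{F}$ is the true $\max$), so it implicitly assumes exact verification; your version makes visible precisely which ``reasonable assumption'' the main text is invoking, at the cost of an added hypothesis. One small correction: your closing remark that a strict subset of the essential variables would only yield $\eva(\vu)\geq 0$ is not quite right — any nonempty subset of the support of $\vd^*$ already excludes $\vd^*$ from $\ballu$, and uniqueness then rules out every adversarial perturbation there, so the same strict argument goes through; indeed the paper states and proves the result variable-by-variable.
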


\begin{proof}
Let us assume that $i$ is \textit{essential} and $\eva(i) = 0$, then $\bm{F}(\ball) = \bm{F}(\ball_i)$ which implies
$$
\max_{\substack{\vd \in \ball\\c'\neq{}c}} \pred_{c'}(\vx + \vd) - \pred_c(\vx + \vd) = 
\max_{\substack{\vd' \in \ball_i\\c'\neq{}c}} \pred_{c'}(\vx + \vd') - \pred_c(\vx + \vd')
$$
by uniqueness of the adversarial perturbation, $ \vd = \vd' $ which is a contradiction as $\vd' \notin \ball_i$ since $\vd'_i \neq 0$ by definition of an \textit{essential variable}. Thus $x_i$ cannot be \textit{essential}, which is a contradiction.
\end{proof}

Essentially, if the variable $i$ is necessary to reach $\vd^*$, then removing it prevents the adversarial example from being reached and lowers the \adv, giving a strictly positive attribution.

\begin{proposition}
\label{prop:iness}
All inessential variables \(\vv\) w.r.t. \(\vd^*\) have a zero importance score \(\eva(\vv) = 0\). 
\end{proposition}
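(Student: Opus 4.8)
The plan is to show that fixing the inessential variables does not shrink the reachable \adv{}: concretely, $\AO_c(\vx,\ball) = \AO_c(\vx,\ball_{\vv})$ (working, as in the proof of Proposition~\ref{prop:ess}, with the exact $\AO$ of Eq.~\ref{eq:adv_surface}), which by Eq.~\ref{eq:tod_estimator} gives $\eva(\vv) = 0$ at once. One direction is free: $\ball_{\vv}\subseteq\ball$, so a maximum over $\ball_{\vv}$ cannot exceed the one over $\ball$, hence $\AO_c(\vx,\ball_{\vv})\le\AO_c(\vx,\ball)$ and already $\eva(\vv)\ge 0$. The whole content is therefore the reverse inequality, and I would obtain it from a single lemma: the optimum defining $\AO_c(\vx,\ball)$ is attained at $\vd^*$, i.e.\ $\max_{c'\ne c}\bigl[\pred_{c'}(\vx+\vd^*)-\pred_c(\vx+\vd^*)\bigr]=\AO_c(\vx,\ball)$. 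Granting the lemma, the proposition is immediate: $\vv$ inessential w.r.t.\ $\vd^*$ means $\vd^*_i=0$ for all $i\in\vv$, so $\vd^*\in\ball_{\vv}$ (it lies in $\ball$ since $\|\vd^*\|_p=\varepsilon$), and therefore $\AO_c(\vx,\ball_{\vv})\ge\max_{c'\ne c}[\pred_{c'}(\vx+\vd^*)-\pred_c(\vx+\vd^*)]=\AO_c(\vx,\ball)$, which forces equality and hence $\eva(\vv)=0$.

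To prove the lemma, note first that since $\pred(\vx+\vd^*)\ne\pred(\vx)=c$, the class $c''=\argmax_j\pred_j(\vx+\vd^*)$ differs from $c$ and satisfies $\pred_{c''}(\vx+\vd^*)-\pred_c(\vx+\vd^*)\ge 0$; as the pair $(\vd^*,c'')$ is feasible in the maximization, this yields $0\le\pred_{c''}(\vx+\vd^*)-\pred_c(\vx+\vd^*)\le\AO_c(\vx,\ball)$, so if $\AO_c(\vx,\ball)\le 0$ we are already done, with $\vd^*$ attaining the value $0$. Otherwise $\AO_c(\vx,\ball)>0$: pick any maximizer $(\hat\vd,\hat c)$; then $\pred_{\hat c}(\vx+\hat\vd)>\pred_c(\vx+\hat\vd)$, so $\argmax\pred(\vx+\hat\vd)\ne c$ and $\hat\vd$ is an adversarial perturbation. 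Minimality of $\vd^*$ forces $\|\hat\vd\|_p\ge\varepsilon$, while $\hat\vd\in\ball$ forces $\|\hat\vd\|_p\le\varepsilon$, so $\hat\vd$ is itself a closest adversarial perturbation, and by the uniqueness hypothesis $\hat\vd=\vd^*$. In both cases $\AO_c(\vx,\ball)$ is attained at $\vd^*$, as claimed.

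The only delicate step is this lemma, and it is precisely where the two standing assumptions of the section earn their keep: calibrating the radius as $\varepsilon=\|\vd^*\|_p$ is what keeps $\vd^*$ feasible in $\ball$ and squeezes $\|\hat\vd\|_p$ down to exactly $\varepsilon$, and \emph{uniqueness} of the closest adversarial perturbation is what pins $\hat\vd$ to $\vd^*$ rather than to some other boundary point at the same distance. Everything else is monotonicity of a maximum under set inclusion, together with the defining identity Eq.~\ref{eq:tod_estimator} of $\eva$.
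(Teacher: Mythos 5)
Your proof is correct and follows essentially the same route as the paper's: both arguments reduce to the observation that the maximum defining the adversarial overlap over $\ball$ is attained at $\vd^*$, which also lies in $\ball_{\vv}$ since the inessential coordinates of $\vd^*$ vanish, so the two overlaps coincide and $\eva(\vv)=0$. Your version is in fact more careful than the paper's, which simply asserts attainment at $\vd^*$ ``by uniqueness of the adversarial perturbation''; your lemma, with its case split on the sign of $\AO_c(\vx,\ball)$ and the squeeze $\|\hat{\vd}\|_p=\varepsilon$, supplies the justification the paper leaves implicit.
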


\begin{proof}
With $i$ being an \textit{inessential} variable, then $\vd^*_i = 0$. It follow that $\vd^* \in \ball_i \subseteq \ball$. Thus
\begin{align*} 
\bm{F}(\ball) &= \max_{\substack{\vd \in \ball\\c'\neq{}c}} \pred_{c'}(\vx + \vd) - \pred_c(\vx + \vd) \\
              &= \pred_{c'}(\vx + \vd^*) - \pred_c(\vx + \vd^*)
\end{align*}
as $\vd^*$ is the unique adversarial perturbation in $\ball$, similarly 
\begin{align*} 
\bm{F}(\ball_i) &= \max_{\substack{\vd' \in \ball\\c'\neq{}c}} \pred_{c'}(\vx + \vd') - \pred_c(\vx + \vd') \\
              &= \pred_{c'}(\vx + \vd^*) - \pred_c(\vx + \vd^*)
\end{align*}
thus $\bm{F}(\ball) = \bm{F}(\ball_i)$ and $\eva(i) = 0$.
\end{proof}

Finally, since \eva~ranks the \textit{essential variables} of $\vd^*$ before the \textit{inessential variables}, and since $\vd^*$ is the \textit{optimal set} from the step $|\vu|$ to the last one $d$, then \eva~provide the \textit{optimal set}, at least from the step $|\vu|$.

\begin{theorem}\textbf{\eva~provide the optimal set from step $|\vu|$ to the last step.}
\label{thm:rsr}
With $\vu$ the essential variables of $\vd^*$, \eva~will rank the $\vu$ variables first and provide the optimal set from the step $|\vu|$ to the last step. 
\end{theorem}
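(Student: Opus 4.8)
The plan is to derive Theorem~\ref{thm:rsr} directly by combining Propositions~\ref{prop:uoptimal},~\ref{prop:ess} and~\ref{prop:iness}. First I would establish the ranking claim. By Proposition~\ref{prop:ess} each of the $|\vu|$ essential variables of $\vd^{*}$ receives a strictly positive \eva{} score, while by Proposition~\ref{prop:iness} every one of the other $d-|\vu|$ (inessential) variables receives a score of exactly $0$. Hence the variables with nonzero \eva{} importance are precisely those in $\vu$, so ordering the scores decreasingly places the essential variables in the top $|\vu|$ positions; since a strictly positive value always ranks above $0$, the $|\vu|$-th cut is unambiguous and any tie-breaking only reshuffles variables within $\vu$ or within its complement.

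Second, I would verify optimality at every step $k$ with $|\vu|\le k\le d$. By the previous step, the set $\bm{w}_k$ of the $k$ most important variables selected by \eva{} always contains $\vu$: it equals $\vu$ when $k=|\vu|$, and equals $\vu$ together with $k-|\vu|$ inessential variables when $k>|\vu|$. Since $\vd^{*}$ is supported on $\vu$ (by definition of essential variables), it satisfies $\vd^{*}_{\overline{\bm{w}_k}}=0$ and is therefore feasible in the minimization defining $\varepsilon^{*}_{\bm{w}_k}$, so $\varepsilon^{*}_{\bm{w}_k}\le ||\vd^{*}||_{p}$. Conversely every adversarial perturbation has norm at least $||\vd^{*}||_{p}$, whence $\varepsilon^{*}_{\bm{w}}\ge ||\vd^{*}||_{p}$ for every subset $\bm{w}$; together with Proposition~\ref{prop:uoptimal}, which identifies $\varepsilon^{*}_{\vu}=||\vd^{*}||_{p}$ as the optimal value for all $k\in[\![|\vu|,d]\!]$, this forces $\varepsilon^{*}_{\bm{w}_k}=||\vd^{*}||_{p}$, i.e. $\bm{w}_k$ is an optimal set at step $k$. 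As $k$ was arbitrary in $[\![|\vu|,d]\!]$, \eva{} returns an optimal set at every step from $|\vu|$ to $d$, which is the claim.

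I do not expect a genuine obstacle: the theorem is essentially the conjunction of the ranking supplied by Propositions~\ref{prop:ess}--\ref{prop:iness} and the optimality of $\vu$ and of its size-$k$ supersets supplied by Proposition~\ref{prop:uoptimal}. The two points that deserve a line of care are the \emph{strict} separation between positive and zero \eva{} scores (so that the top-$|\vu|$ block is determined regardless of the tie-breaking rule) and the remark that, for $k>|\vu|$, padding $\vu$ with arbitrary inessential variables does not change the attainable adversarial distance $||\vd^{*}||_{p}$, so the selection stays optimal. It should also be recalled that everything here relies on the standing assumption that the closest adversarial perturbation $\vd^{*}$ is unique, which is exactly what Propositions~\ref{prop:ess} and~\ref{prop:iness} invoke.
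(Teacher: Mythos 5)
Your proof is correct and follows essentially the same route as the paper's: combine Propositions~\ref{prop:ess} and~\ref{prop:iness} to conclude that every essential variable strictly outranks every inessential one, then invoke Proposition~\ref{prop:uoptimal} for optimality at each step $k\in[\![|\vu|,d]\!]$. You are in fact slightly more careful than the paper on the steps $k>|\vu|$, where you explicitly verify that padding $\vu$ with arbitrary inessential variables leaves the selected set optimal because $\vd^*$ remains feasible in the minimization defining $\varepsilon^*_{\bm{w}_k}$ -- a detail the paper's proof leaves implicit.
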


\begin{proof}
Let $\vu$ denote the \textit{essential variables} of $\vd^*$ and $\vv$ the \textit{inessential variables}. Then according to Proposition~\ref{prop:ess} and Proposition~\ref{prop:iness}, $\forall i \in \vu, \forall j \in \vv: \eva(i) > \eva(j)$. It follow that $\vu$ are the most important variables at step $|\vu|$. Finally, according to Proposition~\ref{prop:uoptimal}, $\vu$ is the optimal set for $k$, with $k \in [\![|\vu|,d]\!]$.
\end{proof}

\begin{figure}[h!]
  \includegraphics[width=0.45\textwidth]{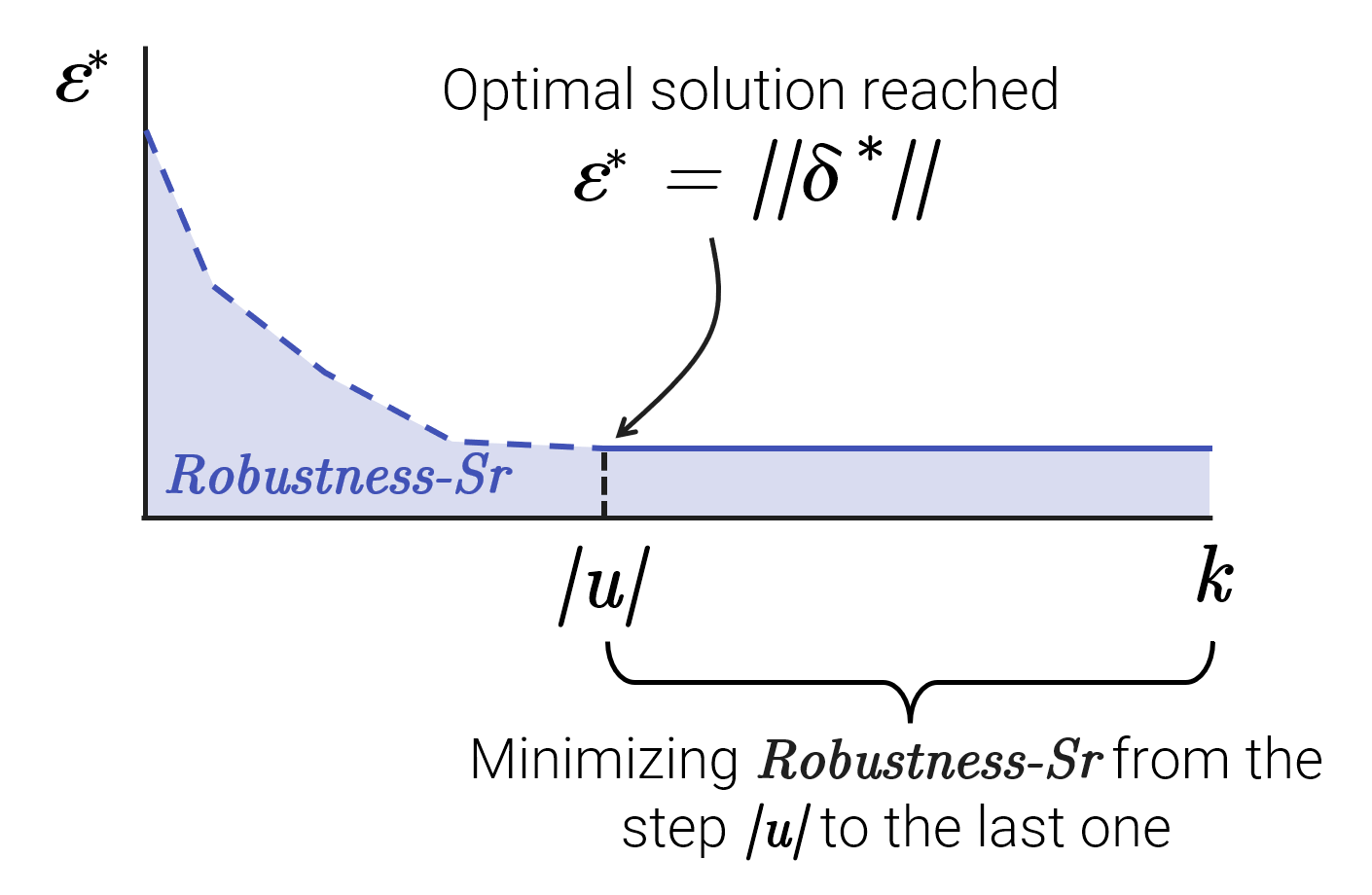}
  \caption{\textbf{\eva~yield optimal subset of variable from step $|\vu|$.} $\rsr$ measures the AUC of the distances to the nearest adversary for the $k$ most important variables. With $\vd^*$ the nearest reachable adversarial perturbation around $\vx$, then \eva~yield the optimal set -- the variables allowing to reach the nearest adversarial example for a given cardinality -- at least from $||\vu|| \leq d$ step to the last one, $\vu$ being the so-called essential variables.
  }
  \label{fig:eva_optimal}
\end{figure}

\section{\eva~and \textit{Stability}}

Stability is one of the most crucial properties of an explanation. Several metrics have been proposed~\cite{aggregating2020,yeh2019infidelity} and the most common one consists in finding around a point $\vx$, another point $\vz$ (in a radius $r$) such that the explanation changes the most according to a given distance between explanation $d$ and a distance over the inputs $\rho$:

$$
\textit{Stability}(\vx, \explainer) = \max_{\vz : \rho(\vz, \vx) \leq r} d( \explainer(\vx), \explainer(\vz) ) 
$$

and $\explainer$ an explanation functional.
It can be shown that the proposed ~\eva~ estimator is bounded by the stability of the model as well as by the radii $\varepsilon$ and $r$, $\varepsilon$ being the radius of $\ball$ and $r$ the radius of stability.
From here, we assume $d$ and $\rho$ are the $\ell_2$ distance.

Let assume that $\pred$ is $L$-lipschitz. We recall that a function $\pred$ is said $L$-lipschitz over $\mathcal{X}$ if and only if $\forall (\vx, \vz) \in \mathcal{X}^2, || \pred(\vx) - \pred(\vz) || \leq || \vx - \vz ||$.

\begin{theorem}\textbf{\eva~ has bounded Stability}
\label{thm:stab}
Given a $L$-lipschitz predictor $\pred$, $\varepsilon$ the radius of $\ball$ and $r$ the Stability radius, then
$$
\textit{Stability}(\vx, \eva) \leq 4L(\varepsilon + r)
$$
\end{theorem}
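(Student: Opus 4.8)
The plan is to bound $\lvert\eva(\vx,\vu,\ball) - \eva(\vz,\vu,\ball)\rvert$ for any $\vz$ with $\rho(\vz,\vx)\le r$ by reducing it to a bound on the variation of $\AOup$ as the center of the ball moves. Since $\eva(\vx,\vu,\ball) = \AOup(\vx,\ball) - \AOup(\vx,\ballu)$, the triangle inequality gives
\begin{align*}
\lvert \eva(\vx,\vu,\ball) - \eva(\vz,\vu,\ball)\rvert
\le \lvert \AOup(\vx,\ball) - \AOup(\vz,\ball)\rvert + \lvert \AOup(\vx,\ballu) - \AOup(\vz,\ballu)\rvert,
\end{align*}
so it suffices to show each term is at most $2L(\varepsilon+r)$. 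I would then note that $d(\explainer(\vx),\explainer(\vz))$ in the Stability definition is exactly this quantity (with $\explainer = \eva(\cdot,\vu,\ball)$), so the maximum over $\vz$ in the $r$-ball is also bounded by $4L(\varepsilon+r)$.

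The core estimate is the claim that for any perturbation set $\mathcal{S}\in\{\ball,\ballu\}$, $\lvert \AOup(\vx,\mathcal{S}) - \AOup(\vz,\mathcal{S})\rvert \le 2L(\varepsilon + r)$. The clean way to get this is to avoid reasoning about the verified bounds $\overline{\pred},\underline{\pred}$ directly — which depend on the particular LiRPA method — and instead exploit that $\AOup$ sits between the true $\AO$ and its relaxation only as an upper bound. Actually the simplest route: I would prove the Lipschitz-type bound directly for the \emph{exact} quantity
$$
\Phi(\vx,\mathcal{S}) = \max_{\vd\in\mathcal{S},\, c'\neq c}\bigl(\pred_{c'}(\vx+\vd) - \pred_c(\vx+\vd)\bigr),
$$
using that $\vx\mapsto \pred_{c'}(\vx)-\pred_c(\vx)$ is $2L$-Lipschitz (difference of two $L$-Lipschitz coordinates), hence $\lvert\Phi(\vx,\mathcal{S})-\Phi(\vz,\mathcal{S})\rvert \le 2L\lVert\vx-\vz\rVert \le 2Lr$ by the standard fact that the sup of a family of $K$-Lipschitz functions is $K$-Lipschitz, together with the observation that shifting the center of $\mathcal{S}$ by $\vx-\vz$ is a bijection of the feasible set. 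For the verified upper bound $\AOup$, the same Lipschitz argument applies provided the chosen LiRPA operator inherits a Lipschitz dependence on the center — but to keep the statement method-agnostic, the honest version uses the sandwich $\Phi(\vx,\mathcal{S}) \le \AOup(\vx,\mathcal{S})$ and a matching crude lower bound $\AOup(\vx,\mathcal{S}) \le \Phi(\vx,\mathcal{S}) + 2L\varepsilon$, which holds because any point in the ball is within $\varepsilon$ of the center and the bounds are sound; combining these two sandwiches at $\vx$ and at $\vz$ yields $\lvert\AOup(\vx,\mathcal{S}) - \AOup(\vz,\mathcal{S})\rvert \le \lvert\Phi(\vx,\mathcal{S})-\Phi(\vz,\mathcal{S})\rvert + 2L\varepsilon \le 2Lr + 2L\varepsilon = 2L(\varepsilon+r)$.

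Putting the two set-cases together through the triangle inequality above gives $\lvert\eva(\vx,\vu,\ball)-\eva(\vz,\vu,\ball)\rvert \le 4L(\varepsilon+r)$ for every admissible $\vz$, and taking the max over $\vz$ yields $\textit{Stability}(\vx,\eva) \le 4L(\varepsilon+r)$, as claimed.

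The main obstacle I anticipate is the $2L\varepsilon$ slack term: one must argue that the verified upper bound $\AOup$ cannot exceed the true adversarial overlap $\Phi$ by more than the radius-scaled Lipschitz amount. This is intuitively true for \emph{tight} relaxations but is not automatic for arbitrary LiRPA variants (IBP in particular can be quite loose); a fully rigorous treatment either restricts to relaxations satisfying this slack property, or — more cleanly — replaces the method-agnostic step by assuming the verified bounds $\vx \mapsto \overline{\pred}_{c'}(\vx,\mathcal{S})$ and $\vx\mapsto\underline{\pred}_c(\vx,\mathcal{S})$ are themselves $L$-Lipschitz in the center (which holds for the standard closed-form IBP/Forward/Backward bounds), in which case $\AOup$ is directly $2L$-Lipschitz in $\vx$ and the $\varepsilon$ dependence enters only through a separate soundness argument. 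I would flag this assumption explicitly rather than hide it.
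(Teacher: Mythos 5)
Your proof is correct and its skeleton coincides with the paper's: split $\eva$ into the $\ball$ and $\ballu$ terms by the triangle inequality, use that the class margin $\vm = \pred_{c'} - \pred_{c}$ is $2L$-Lipschitz, and re-center the maximization so that the move from $\vx$ to $\vz = \vx + \vga$ becomes a shift of the feasible set. Where you genuinely depart from the paper is in how the verified bound is treated. The paper's proof silently substitutes the exact maximum $\max_{\vd}\vm(\vx+\vd)$ for $\AOup(\vx,\ball)$ -- i.e.\ it proves stability of the idealized estimator built on $\AO$ rather than of $\eva$ as actually computed with LiRPA bounds -- and its $\varepsilon$ term then appears only because the shift step is bounded loosely by $2L(\|\vd\|+\|\vga\|)$ instead of the tight $2L\|\vga\|\leq 2Lr$ (the exact-max estimator in fact satisfies the stronger bound $4Lr$). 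You instead keep $\AOup$ explicit and pay for the gap to the exact overlap $\Phi$ via the sandwich $\Phi \leq \AOup \leq \Phi + 2L\varepsilon$, which is where your $\varepsilon$ enters. Your own caveat is the right one and is the only real weak point of either argument: the upper half of that sandwich is a tightness hypothesis on the relaxation, not a consequence of soundness -- IBP on a deep network can exceed the true maximum by far more than $2L\varepsilon$ -- so one must either assume the verified bounds $\overline{\pred}(\cdot,\ball)$, $\underline{\pred}(\cdot,\ball)$ are themselves $L$-Lipschitz in the center, or restrict to relaxations with bounded slack. The paper's proof needs the same (unstated) hypothesis for the theorem to apply to $\eva$ as implemented, so your version, which flags it, is the more honest of the two.
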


\begin{proof}
With $c' \neq c$ we denote $\vm(\vx) = \pred_{c'}(\vx) - \pred_{c}(\vx)$. We note that by additivity of the Lipschitz constant $\vm$ is 2$L$-Lipschitz.
\begin{align*} 
&\textit{Stability}(\vx, \eva) = \max_{\vz : \rho(\vz, \vx) \leq r} || \eva(\vx), \eva(\vz) || \\
  &= \max_{\vz : \rho(\vz, \vx) \leq r} 
  ||\max_{\vd} \vm(\vx + \vd) 
  - \max_{\vd_{\vu}} \vm(\vx + \vd_{\vu}) \\
  &~~~~- \max_{\vd} \vm(\vz + \vd)
  + \max_{\vd_{\vu}} \vm(\vz + \vd_{\vu}) || \\
  &\leq \max_{\vz : \rho(\vz, \vx) \leq r} 
  ||\max_{\vd} \vm(\vx + \vd) 
  - \max_{\vd} \vm(\vz + \vd) || \\
  &~~~~+ || \max_{\vd_{\vu}} \vm(\vz + \vd_{\vu}) 
  - \max_{\vd_{\vu}} \vm(\vx + \vd_{\vu}) || \\
  &= \max_{\vga : ||\vga|| \leq r} 
  ||\max_{\vd} \vm(\vx + \vd) 
  - \max_{\vd} \vm(\vx + \vd + \vga) || \\
  &~~~~+ || \max_{\vd_{\vu}} \vm(\vx + \vd_{\vu} + \vga) 
  - \max_{\vd_{\vu}} \vm(\vx + \vd_{\vu}) || \\
  &\leq 2L (||\vd|| + ||\vga||) + 2L (||\vd|| + ||\vga||)\\
  &= 4L (\varepsilon + r)
\end{align*}

\end{proof}

\section{Attribution methods}
\label{ap:methods}

In the following section, we give the formulation of the different attribution methods used in this work. The library used to generate the attribution maps is Xplique~\cite{fel2021xplique}.
By simplification of notation, we define $\pred(\bm{x})$ the logit score (before softmax) for the class of interest (we omit $c$). 
We recall that an attribution method provides an importance score for each input variable $x_i$.
We will denote the explanation functional mapping an input of interest $\vx = (x_1, ..., x_d) \in \mathcal{X}$ as $\explainer : \mathcal{X} \to \mathbb{R}^d$.

\textbf{Saliency}~\cite{simonyan2013deep} is a visualization technique based on the gradient of a class score relative to the input, indicating in an infinitesimal neighborhood, which pixels must be modified to most affect the score of the class of interest.

$$ \explainer(\bm{x}) = ||\nabla_{\bm{x}} \pred(\bm{x})|| $$

\textbf{Gradient $\odot$ Input}~\cite{shrikumar2017learning} is based on the gradient of a class score relative to the input, element-wise with the input, it was introduced to improve the sharpness of the attribution maps. A theoretical analysis conducted by~\cite{ancona2017better} showed that Gradient $\odot$ Input is equivalent to $\epsilon$-LRP and DeepLIFT~\cite{shrikumar2017learning} methods under certain conditions -- using a baseline of zero, and with all biases to zero.

$$ \explainer(\bm{x}) = \bm{x} \odot ||\nabla_{\bm{x}} \pred(\bm{x})|| $$

\textbf{Integrated Gradients}~\cite{sundararajan2017axiomatic} consists of summing the gradient values along the path from a baseline state to the current value. The baseline $\vx_0$ used is zero. This integral can be approximated with a set of $m$ points at regular intervals between the baseline and the point of interest. In order to approximate from a finite number of steps, we use a Trapezoidal rule and not a left-Riemann summation, which allows for more accurate results and improved performance (see~\cite{sotoudeh2019computing} for a comparison). For all the experiments $m = 100$.

$$ \explainer(\bm{x}) = (\bm{x} - \bm{x}_0) 
\int_0^1 \nabla_{\bm{x}} \pred( \bm{x}_0 + \alpha(\bm{x} - \bm{x}_0) )) d\alpha $$

\textbf{SmoothGrad}~\cite{smilkov2017smoothgrad} is also a gradient-based explanation method, which, as the name suggests, averages the gradient at several points corresponding to small perturbations (drawn i.i.d from an isotropic normal distribution of standard deviation $\sigma$) around the point of interest. The smoothing effect induced by the average help reducing the visual noise, and hence improve the explanations. The attribution is obtained by averaging after sampling $m$ points. For all the experiments, we took $m = 100$ and $\sigma = 0.2 \times (\vx_{\max} - \vx_{\min})$ where $(\vx_{\min}, \vx_{\max})$ being the input range of the dataset.

$$ \explainer(\bm{x}) = \underset{\vd \sim \mathcal{N}(0, \bm{I}\sigma)}{\mathbb{E}}(\nabla_{\bm{x}} \pred( \bm{x} + \vd) )
$$

\textbf{VarGrad}~\cite{hooker2018benchmark} is similar to SmoothGrad as it employs the same methodology to construct the attribution maps: using a set of $m$ noisy inputs, it aggregates the gradients using the variance rather than the mean. For the experiment, $m$ and $\sigma$ are the same as Smoothgrad. Formally:

$$ \explainer(\bm{x}) = \underset{\vd \sim \mathcal{N}(0, \bm{I}\sigma)}{\mathbb{V}}(\nabla_{\bm{x}} \pred( \bm{x} + \vd) )
$$

\textbf{Grad-CAM}~\cite{Selvaraju_2019} can only be used on Convolutional Neural Network (CNN). Thus we couldn't use it for the MNIST dataset. The method uses the gradient and the feature maps $\bm{A}^k$ of the last convolution layer. More precisely, to obtain the localization map for a class, we need to compute the weights $\alpha_c^k$ associated to each of the feature map activation $\bm{A}^k$, with $k$ the number of filters and $Z$ the number of features in each feature map, with $\alpha_k^c = \frac{1}{Z} \sum_i\sum_j \frac{\partial{\pred(\bm{x})}}{\partial \bm{A}^k_{ij}} $ and 

$$\explainer = \max(0, \sum_k \alpha_k^c \bm{A}^k) $$

As the size of the explanation depends on the size (width, height) of the last feature map, a bilinear interpolation is performed in order to find the same dimensions as the input. For all the experiments, we used the last convolutional layer of each model to compute the explanation.

\textbf{Grad-CAM++ (G+)}~\cite{chattopadhay2018grad} is an extension of Grad-CAM combining the
positive partial derivatives of feature maps of a convolutional layer with a weighted special class score. The weights $\alpha_c^{(k)}$ associated to each feature map is computed as follow : 

$$\alpha_k^c = 
    \sum_i \sum_j [
    \frac{ \frac{\partial^2 \pred(\vx) }{ (\partial \bm{A}_{ij}^{(k)})^2 } }
    { 2 \frac{\partial^2 \pred(\vx) }{ (\partial \bm{A}_{ij}^{(k)})^2 } + \sum_i \sum_j \bm{A}^{(k)}_{ij}  \frac{\partial^3 \pred(\vx) }{ (\partial \bm{A}_{ij}^{(k)})^3 } }
    ]
$$

\textbf{Occlusion}~\cite{zeiler2013visualizing} is a sensitivity method that sweeps a patch that occludes pixels over the images using a baseline state and use the variations of the model prediction to deduce critical areas. For all the experiments, we took a patch size and a patch stride of $\frac{1}{7}$ of the image size. Moreover, the baseline state $\vx_0$ was zero.

$$ \explainer(\vx)_i = \pred(\bm{x}) - \pred(\bm{x}_{[\bm{x}_i = 0]})  $$

\textbf{RISE}~\cite{petsiuk2018rise} is a black-box method that consists of probing the model with $N$ randomly masked versions of the input image to deduce the importance of each pixel using the corresponding outputs. The masks $\bm{m} \sim \mathcal{M}$ are generated randomly in a subspace of the input space. For all the experiments, we use a subspace of size $7 \times 7$, $N = 6000$, and $\mathbb{E}(\mathcal{M}) = 0.5$.

$$ \explainer(\bm{x}) = \frac{1}{\mathbb{E}(\mathcal{M}) N} \sum_{i=0}^N \pred(\bm{x} \odot \bm{m}_i) \bm{m}_i $$

\textbf{Greedy-AS}~\cite{hsieh2020evaluations}
is a greedy-like method which aggregates step by step the most important pixels -- the pixels that allow us to obtain the closest possible adversarial example. Starting from an empty set, we evaluate the importance of the variables at each step. Formally, with $\vu$ the feature set chosen at the current step and $\overline{\vu}$ his complement. We define $ b : \mathcal{P}(\overline{\vu}) \to \{0, 1\}^{|\overline{\vu}|} $ a function which binarizes a sub-set of the unchosen elements. Then, given the set of selected elements $\vu$, we find the importance of the elements still not selected, while taking into account their interactions. This amounts to solving the following regression problem:  

$$
\bm{w}^t, c^t = \argmin \underset{ \bm{v} \in \mathcal{P}(\overline{\vu})}{\sum}  \big( (\bm{w}^t b(\bm{v}) + c) - v(\vu \cup \bm{v}) \big)^2
$$

The weights obtained indicate the importance of each variable by taking into account these interactions. We specify that $v(\cdot)$ is defined here as the minimization of the distance to the nearest adversarial example using the variables $\vu \cup \bm{v}$.
In the experiments, the minimization of this objective is approximated using PGD~\cite{madry2017pgd} adversarial attacks, a regression step (computation of $\bm{w^t}$) adds $10$\% of the variables and $\bm{v}$ is sampled using 1000 samples from $\mathcal{P}(\vu)$. Finally, the variables added first to get a better score.

\section{Evaluation}

For the purpose of the experiments, three fidelity metrics have been chosen. For the whole set of metrics, $\pred(\vx)$ score is the score after the softmax of the models.

\paragraph{Deletion.}~\cite{petsiuk2018rise} 
The first metric is Deletion, it consists in measuring the drop in the score when the important variables are set to a baseline state. Intuitively, a sharper drop indicates that the explanation method has well-identified the important variables for the decision. The operation is repeated on the whole image until all the pixels are at a baseline state. Formally, at step $k$, with $\vu$ the most important variables according to an attribution method, the Deletion$^{(k)}$ score is given by:

$$
\text{Deletion}^{(k)} = \pred(\vx_{[\vx_{\vu} = \vx_0]})
$$

We then measure the AUC of the Deletion scores. For all the experiments, and as recommended by~\cite{hsieh2020evaluations}, the baseline state is not fixed but is a value drawn on a uniform distribution $\vx_0 \sim \mathcal{U}(0, 1)$.

\paragraph{Insertion.}~\cite{petsiuk2018rise}
 Insertion consists in performing the inverse of Deletion, starting with an image in a baseline state and then progressively adding the most important variables. Formally, at step $k$, with $\vu$ the most important variables according to an attribution method, the Insertion$^{(k)}$ score is given by:
$$
\text{Insertion}^{(k)} = \pred(\vx_{[\vx_{\overline{\vu}} = \vx_0]})
$$
The baseline is the same as for Deletion.

\paragraph{$\mu$Fidelity}~\cite{aggregating2020} 
consists in measuring the correlation between the fall of the score when variables are put at a baseline state and the importance of these variables. Formally:
$$
\mu\text{Fidelity} = \underset{\vu \subseteq \{1, ..., d\} \atop |\vu| = k}{\operatorname{Corr}}\left( \sum_{i \in \vu} \explainer(\vx)_i  , \pred(\vx) - \pred(\vx_{[\vx_{\vu} = \vx_0]})\right)
$$

For all experiments, $k$ is equal to 20\% of the total number of variables and the baseline is the same as the one used by Deletion.

\section{Models}

The models used were all trained using Tensorflow~\cite{tensorflow2015}. For MNIST, the model is a stacking of 5 dense layers composed of (256, 128, 64, 32, 10) neurons respectively. It achieves an accuracy score above 98\% on the test set.
Concerning the Cifar-10 model, it is composed of 3 Convolutional layers of (128, 80, 64) filters, a MaxPooling (2, 2), and to Dense layer of (64, 10) neurons respectively, and achieves 75\% of accuracy on the test set. For ImageNet, we used a pre-trained VGG16~\cite{simonyan2014deep}.

\section{Targeted explanations}
\label{ap:targeted}

\begin{figure}[t!]
  \includegraphics[width=0.45\textwidth]{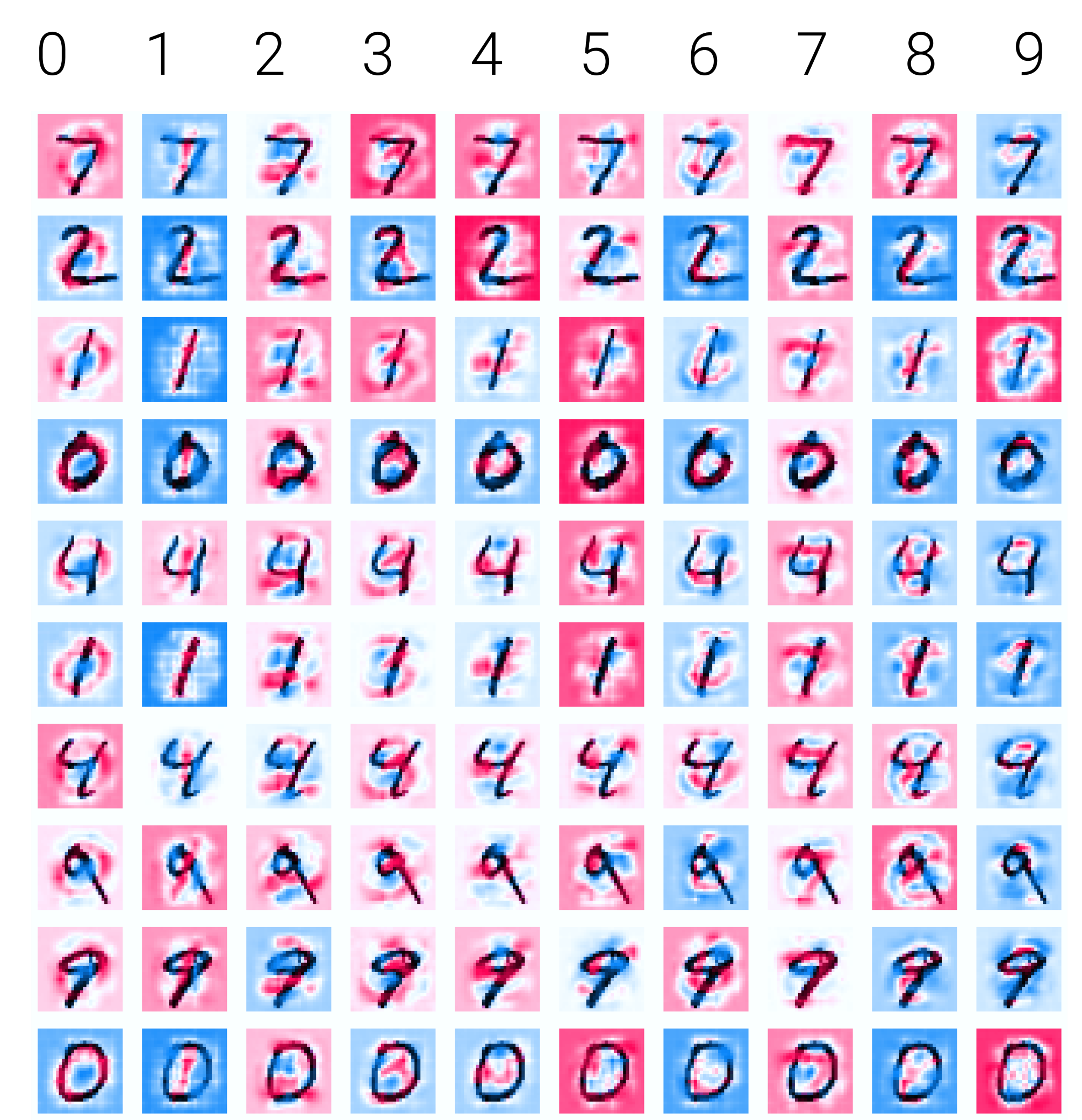}
  \caption{\textbf{Targeted Explanations} Attribution-generated explanations for a decision other than the one predicted. Each column represents the class explained, e.g., the first column looks for an explanation for the class `0' for each of the samples. As indicated in section~\ref{sec:targeted_explanations}, the red areas indicate that a black line should be added and the blue areas that it should be removed. More examples are available in the Appendix.
  }
  \label{fig:ap_targeted}
\end{figure}

In order to generate targeted explanations, we split the calls to $\eva(\cdot, \cdot)$ in two: the first one with `positive' perturbations from $\ball^{(+)}$ (only positive noise), a call with `negative' perturbations from $\ball^{(-)}$ (only negative-valued noise) as defined in Section~\ref{sec:targeted_explanations}. 

We then get two explanations, one for positive noise
$\bm{\phi}^{(+)}_{\vu} = \bm{F}_c(\mathcal{B}^{(+)}(\vx)) - \bm{F}_c(\mathcal{B}^{(+)}_{\vu}(\vx))$, the other for negative noise $\bm{\phi}^{(-)}_{\vu} = \bm{F}_c(\mathcal{B}^{(-)}(\vx)) - \bm{F}_c(\mathcal{B}^{(-)}_{\vu}(\vx))$. Intuitively, high importance for $\bm{\phi}^{(+)}_{\vu}$  means that the model is sensitive to the addition of a white line. Conversely, high importance for $\bm{\phi}^{(-)}_{\vu}$  means that removing it changes the decision model. These two explanations being opposed, we construct the final explanation as $\bm{\phi}_{\vu} = \bm{\phi}^{(+)}_{\vu} - \bm{\phi}^{(-)}_{\vu}$. More examples of results are given in Fig.~\ref{fig:ap_targeted}.

\end{document}